\definecolor{pink}{rgb}{0.58,0,0.83}
\definecolor{orange}{rgb}{1,0.5,0}
\definecolor{lightgreen}{rgb}{0.2, 0.8, 0.2}
\definecolor{lightyellow}{rgb}{0.84, 0.65, 0.13}
\newtheorem{problem}{Problem}
\newtheorem{assumption}{Assumption}
\newtheorem{definition}{Definition} 
\newtheorem{remark}{Remark}
\newtheorem{lemma}{Lemma}
\newtheorem{thm}{Theorem}
\definecolor{ForestGreen}{RGB}{34,139,34}
\def\BibTeX{{\rm B\kern-.05em{\sc i\kern-.025em b}\kern-.08em
		T\kern-.1667em\lower.7ex\hbox{E}\kern-.125emX}}
\begin{document}
	\title{\huge Vector Field-Guided Learning Predictive Control for Motion Planning of Mobile Robots with Uncertain Dynamics}
	
		%
	
	\author{
		Yang Lu$^{1}$, Weijia Yao$^{2}$, \emph{IEEE Member}, Yongqian Xiao$^{1}$, Xinglong Zhang$^{1}$, \emph{IEEE Member}\\ Xin Xu$^{1}$, \emph{IEEE Senior Member}, Yaonan Wang$^{2}$, Dingbang Xiao$^{1}$
		\thanks{$^{1}$Yang Lu, Yongqian Xiao, Xinglong Zhang, Xin Xu, and Dingbang Xiao are with the College of Intelligence Science and Technology, National University of Defense Technology, Hunan, China {\tt\footnotesize luyang18@mail.sdu.edu.cn, shawyongqian@gmail.com, zhangxinglong18@nudt.edu.cn, xuxin$\_$mail@263.net, dingbangxiao@nudt.edu.cn}.}
		\thanks{$^{2}$Weijia Yao and Yaonan Wang are with the School of Robotics, Hunan University, China {\tt\footnotesize wjyao@hnu.edu.cn,yaonan@hnu.edu.cn}. }
		\newline
		\textcolor{red}{This work has been accepted by IEEE/ASME Transactions on Mechatronics for publication. Copyright may be transferred without notice, after which this version may no longer be accessible.}
	}
	
	\maketitle
	\thispagestyle{empty}
	\pagestyle{empty}
	
	\begin{abstract}
		In obstacle-dense scenarios,  providing safe guidance for mobile robots is critical to improve the safe maneuvering capability.
		However, the guidance provided by standard guiding vector fields (GVFs) may limit the motion capability due to the improper curvature of the integral curve when traversing obstacles. On the other hand, robotic system dynamics are often time-varying, uncertain, and even unknown during the motion planning process. Therefore, many existing kinodynamic motion planning methods could not achieve satisfactory reliability in guaranteeing safety. To address these challenges, we propose a two-level Vector Field-guided Learning Predictive Control (VF-LPC) approach that improves safe maneuverability. The first level, the guiding level, generates safe desired trajectories using the designed kinodynamic GVF, enabling safe motion in obstacle-dense environments. The second level, the Integrated Motion Planning and Control (IMPC) level, first uses a deep Koopman operator to learn a nominal dynamics model \emph{offline} and then updates the model uncertainties \emph{online} using sparse Gaussian processes (GPs). The learned dynamics and a game-based safe barrier function are then incorporated into the LPC framework to generate near-optimal planning solutions. Extensive simulations and real-world experiments were conducted on quadrotor unmanned aerial vehicles and unmanned ground vehicles, demonstrating that VF-LPC enables robots to maneuver safely.
	\end{abstract}
	
	\begin{IEEEkeywords}
		Collision avoidance, integrated planning and control, planning under uncertainty, reinforcement learning. 
	\end{IEEEkeywords}
	
	
	\definecolor{limegreen}{rgb}{0.2, 0.8, 0.2}
	\definecolor{forestgreen}{rgb}{0.13, 0.55, 0.13}
	\definecolor{greenhtml}{rgb}{0.0, 0.5, 0.0}
	
	\section{Introduction}
	\IEEEPARstart{P}{rior} information or desired paths are often required to guide robots' motion.
	As an effective and efficient method, guiding vector field (GVF) techniques have been recently studied to realize path-following or obstacle-avoidance tasks successfully for robots like fixed-wing airplanes\cite{yao2021singularity}, unicycle-type vehicles\cite{panagou2014motion},  unmanned aerial vehicles\cite{marchidan2020collision}, etc. The GVF typically addresses motion planning and control tasks in the following manner\cite{7942030}: (1) simple kinematic models of robots are considered (such as single or double integrator models); (2) GVF provides guidance signals to be tracked by the inner-loop dynamic controller). Therefore, the guidance level (i.e., the design of the GVF) and the control level can be separately designed. 
	However, the proposed GVFs such as\cite{yao2022guiding} may generate guidance signals with drastic variations when crossing static obstacles, posing a challenge to the robot's maneuvering capabilities. On the other hand, robot dynamics and safety constraints (e.g., avoidance of suddenly appearing obstacles) are critical in improving the robots' maneuvering capabilities. When considering the robotic dynamics constraint separately at planning and control levels, the issues of consistency and optimality have to be carefully addressed for real-world applications. Therefore, it is crucial to introduce an Integrated Motion Planning and Control (IMPC) approach to generate solutions transmitted to robot dynamics directly. In light of the above two aspects, designing adaptive IMPC approaches that leverage the improved GVF as guidance for real-world robots with uncertain/unknown dynamics is promising, particularly in obstacle-dense scenarios; An illustration diagram is presented in Fig.~\ref{fig_kinematics_dynamics_v3}.

	\begin{figure}[!htb]
		\centering\includegraphics[width=3.0in]{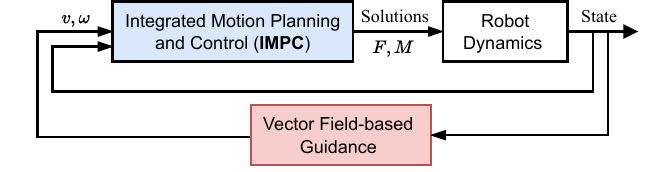}
		\caption{The proposed integrated motion planning and control architecture. In this figure, the guidance module generates desired linear and angular velocities $v$ and $\omega$; $F$ and $M$ are forces and moments used to control a robot directly. }
		\label{fig_kinematics_dynamics_v3}
	\end{figure}
	
	Optimization-based motion planning methods have recently been studied for realizing IMPC \cite{nakka2022trajectory,pek2020fail}. It is crucial to estimate model uncertainties in real-time to address the challenge of uncertain robotic dynamics impacting model predictive control (MPC) methods. This can be achieved through online estimation techniques, while solving nonlinear or non-convex optimization problems online may pose reliability and computational intensity issues. Reinforcement learning (RL) and adaptive dynamic programming (ADP) are promising in solving optimal planning and control problems\cite{deptula2019approximate},\cite{10271561},\cite{9756946}. Recent works on realizing RL-based IMPC have been studied\cite{10271561,9756946}. They generally design proper reward functions based on priori information or desired paths to learn optimal policies. With the desired paths, existing endeavors primarily realize safe tracking control by designing reward functions encompassing state errors, control inputs, and safety terms. However, in obstacle-dense scenarios, obtaining the desired safe paths is not easy. When the desired path is constituted by a straight line from the starting point to the endpoint, it may traverse many obstacles in obstacle-dense environments. Furthermore, the desired speed or angular velocity for these paths needs to be computed separately. The two aspects increase the complexity of designing RL-based IMPC algorithms, posing challenges to achieving near-optimal performance. Therefore, providing guidance for them is very important. Despite the above difficulties, current RL-based IMPC has shown effectiveness and efficiency for robots with nonlinear system dynamics\cite{deptula2019approximate},\cite{10271561},\cite{9756946}. Letting RL-based IMPC approaches work as the IMPC structure in Fig.~\ref{fig_kinematics_dynamics_v3} is still promising if the above-discussed challenges can be well addressed. Namely, a GVF provides kinematic guidance (e.g. linear and angular velocities) for RL-based IMPC approaches.

	There exist two main categories of RL-based IMPC studies; i.e., model-based and model-free ones. Obtaining a precise dynamic model is nontrivial due to internal factors such as system nonlinearity, and external factors such as uneven terrain, and slippery surfaces. The advantage of model-free RL methods lies in their independence from precise models. However, they still lack generalization ability to the unseen scenarios and data-efficiency of policy learning. Efficient model-based RL (MBRL) approaches have shown their effectiveness in real-world applications\cite{10271561,9756946}. Since real-world robot dynamics are often uncertain and even unknown, MBRL approaches struggle to achieve satisfactory reliability. This is due to: (1) The online adaptability of current data-driven modeling methods is insufficient; (2) The safety terms in the reward function may lead to policy divergence of RL-based motion planning algorithms. Motivated by the two challenges, we propose compensating for the data-driven model online and adopting a receding-horizon actor-critic framework (also called Learning Predictive Control\cite{xu2018learning}, LPC) to ensure the convergence of planning policies in the prediction horizons.

	To realize motion planning of mobile robots with uncertain/unknown dynamics in obstacle-dense scenarios, we propose a Vector Field-guided Learning Predictive Control (VF-LPC) approach. Specifically, the guiding level plans preliminary trajectories fast to avoid dense static obstacles. The optimal performance under the uncertain/unknown system dynamics and the safety constraints from (suddenly appearing) moving obstacles are optimized by solving the Hamilton–Jacobi–Bellman (HJB) equation online in prediction horizons. In particular, we introduce a sparsification technique in the model compensation and finite-horizon actor-critic learning processes to improve online efficiency and IMPC performance. The contributions of this paper are summarized as follows:

	\begin{itemize}
		\item[1)] The VF-LPC approach can achieve near-optimal motion planning for mobile robots with uncertain/unknown dynamics in obstacle-dense environments. The approach achieves higher computational efficiency and obtains more reliable solutions than advanced model predictive control (MPC) and RL methods in solving nonlinear optimization problems (see Sections~\ref{SecIII-B-3} and~\ref{Comparision}).
		\item[2)] Our proposed VF-LPC approach not only has theoretical guarantees but also has been demonstrated effective in practice since it has been validated by extensive simulations and experiments with quadrotor unmanned aerial vehicles (UAVs) and a Hongqi E-HS3 vehicle (see Sections~\ref{Section_TheoreticAnalysis} and~\ref{Comparision}). 
		\item[3)] The VF-LPC approach can update online the uncertain dynamics of a fully data-driven model trained by the deep Koopman operator. It reduces the differences between the real and learned system dynamics models when the environment is time-varying or the system dynamics are learned inaccurately. Therefore, it improves the planning performance and guarantees safety (see Section~\ref{3B1}).
		\item[4)] By adding virtual obstacles, the modified and improved discrete-time composite vector field adopted by our VF-LPC approach can satisfy robot kinodynamic constraints. In addition, the modified vector field does not suffer from the deadlock problem, which usually exists in traditional composite vector fields.  Moreover,  VF-LPC can deal with (suddenly appearing) moving obstacles by introducing a game-based barrier function (see Sections~\ref{SFSCVF}, \ref{SwitchingBarrierFunction}). 
	\end{itemize}
	
	The remainder of this paper is organized as follows. Section~\ref{RelatedWork} reviews the related works. Then Section \ref{PreliminariesAndProblemFormulation} provides the preliminaries and problem formulation. The VF-LPC is introduced in Section \ref{GuidingRHRL}. Then, Section \ref{Section_TheoreticAnalysis} presents the convergence results of VF-LPC. Section \ref{Comparision} elaborates on the simulation and experimental validation. Finally, the conclusion is drawn in Section \ref{Conclusion}.
	
	\textit{Notation}: The notation $\left\|{x}\right\|_{Q}^{2}$ represents $x^{\top}Qx$, where $Q$ is a positive (semi-)definite matrix, and  $\left\|{x}\right\|=\sqrt{x^{\top}x}$. The field of real numbers is denoted by $\mathbb{R}$. A diagonal matrix is denoted by $\text{diag}\{\nu_1,\cdots,\nu_n\}$, where $\nu_1,\cdots,\nu_n \in \mathbb{R}$ are entries on the diagonal. Throughout this paper, we use the notation $I$ to denote the identity matrix of suitable dimensions. The notations $\otimes$ and $\odot$ denote the Kronnecker and Hadamard products, respectively.
	\section{Related Work}\label{RelatedWork}
	We first discuss several modeling methods and then present a literature review on MPC and learning-based approaches for motion planning of robots with uncertain dynamics.
	
	\emph{Data-driven modeling}. Current advanced data-driven modeling methods include least-squares\cite{vicente2020linear}, recurrent neural networks (NNs)\cite{8574962}, multi-layer perception (MLP)\cite{spielberg2019neural}, neural networks\cite{da2020modelling}, etc.  As a linear operator, the Koopman operator-based modeling methods\cite{schmid2010dynamic},\cite{williams2015data},\cite{kevrekidis2016kernel} can establish linear time-invariant system dynamics. Impressed by such a property, dynamic mode decomposition (DMD)\cite{schmid2010dynamic}, extended DMD (EDMD)\cite{williams2015data}, and kernel-based DMD\cite{kevrekidis2016kernel} have received much attention in recent years. However, the modeling performance of the Koopman operator relies heavily on the observable function design. Consequently, approaches using NNs for automated observable function construction \cite{lusch2018deep,otto2019linearly} were proposed and have been validated to be effective through numerical simulations. 
	To improve the modeling accuracy, Xiao et al. \cite{xiao2023ddk} proposed a deep direct Koopman (DDK) method for identifying linear time-invariant vehicle dynamic models. Unlike these, we further consider improving the online adaptability of offline-trained dynamics models by learning the uncertain dynamics of the offline-trained Koopman model.
	
	\emph{Sampling-based and optimization-based motion planning algorithms for robots with uncertain dynamics.}
	Several studies have integrated chance constraints with sampling-based Rapidly-exploring Random Trees (CC-RRT) methods, presenting efficient path planning capability in densely cluttered obstacle environments \cite{luders2010chance,aoude2013probabilistically}. Gaussian processes (GPs) were employed for determining dynamically feasible paths and CC-RRT for establishing probabilistically feasible paths\cite{aoude2013probabilistically}. These approaches fail to guarantee
	optimality due to the lack of consideration of the robot’s
	motion dynamics. 
	
	Under uncertain dynamics, optimization-based motion planning methods typically involve objective functions incorporating Conditional Value-at-Risk (CVaR) measures\cite{hakobyan2022distributionally},\cite{dixit2023risk}. Nakka et. al. handle the motion planning problem of chance-constrained nonlinear stochastic systems by deriving a surrogate problem with convex constraints\cite{nakka2022trajectory}. Zhu et. al. designed a chance-constrained nonlinear MPC method to solve collision avoidance problems of multi-robots under various uncertainties like motion disturbance\cite{zhu2019chance}. To realize real-time optimization, they developed a tight bound for the approximation of collision probability. In\cite{lew2020chance}, Lew et. al. proposed a robust trajectory optimization method for nonlinear systems with model uncertainty and disturbances. Especially, it is capable of processing non-convex obstacle constraints.
	In \cite{lindemann2021robust}, Kalman filtering was utilized for state estimation, and risk-aware safety constraints arising from estimation errors were introduced into stochastic optimal motion planning problems.  Many of the above studies process various uncertainties like sensing, obstacle motion, robot dynamics, etc. According to \cite{singh2023robust}, two major ideas are considered in the area of feedback motion planning, i.e., \emph{probabilistic guarantees on safety} and \emph{bounded models of uncertainties}. In this paper, the unknown system dynamics are \emph{modeled} with the previous deep Koopman operator\cite{xiao2023ddk}. We consider the bounded uncertainty of data-driven system dynamics and compensate online for it. 
	
	\emph{Learning-based motion planning for robots with uncertain dynamics.}
	As discussed in\cite{snyder2023online}, RL-based approaches to addressing obstacle avoidance problems under uncertainties typically fall into two categories: one involves the endeavor to construct stochastic models of the uncertain dynamics inherent in robotic systems, leveraging the resultant probabilistic models for planning or policy learning; the other entails devising plans that account for worst-case scenarios. Regarding the first category, a Gaussian process model was used in the policy search framework of PILCO, thereby capturing the system dynamics and estimating the probability of safe constraints violation\cite{polymenakos2019safe}. During the policy learning process, the candidate policies are optimized toward the safer directions with low risks. In \cite{janson2017monte}, the Monte Carlo motion planning method was proposed to sample feasible trajectories under uncertainties, thereby fulfilling probabilistic collision avoidance constraints. Model-based motion planning under uncertain dynamics can be found in \cite{10271561},\cite{9756946}.
	Regarding the second category, Snyder et. al. proposed a trust-region-based online learning algorithm with provable regret bounds by minimizing worst-case regret\cite{snyder2023online}. To realize kinodynamic motion planning and control, we design an IMPC framework leveraging the guidance from the vector field and further consider the uncertainties of the data-driven model in obstacle-dense environments.
	\section{Preliminaries and Problem Formulation}\label{PreliminariesAndProblemFormulation}
	
	This section presents a detailed preliminary for the composite vector field, which will be developed later in this paper for generating preliminary kinodynamic trajectories. Then, we review a data-driven deep Koopman-based system modeling method. The previously developed sparse GP can efficiently identify model uncertainties online, which is also reviewed here to identify the model uncertainties of the nominal model. Note that for one thing, GP can be used to identify the full system dynamics individually, but the long-horizon modeling accuracy is difficult to guarantee. To enhance the accuracy, one has to use flawless samples and fine-tune the hyperparameters, which can be computationally demanding. For another, the Koopman model may not accurately characterize the exact model, so estimating the uncertainty of model learning is essential. Therefore, to improve the accuracy, we propose to combine the Koopman model learning and the sparse GP. In particular, we employ online sparse GP to compensate for the inaccuracy and uncertainty associated with the offline-trained Koopman model, which will be introduced later in our methodology. Finally, we present the problem formulation for optimal motion planning under the fully data-driven system model containing uncertain dynamics.
	\subsection{Composite Vector Field}
	Consider the following ordinary differential equation
	\begin{equation*}
		\dot{\xi}=\chi(\xi(t))
	\end{equation*}
	with the initial state $\xi(0)\in\mathbb{R}^2$, where $\chi(\cdot)$ is continuously differentiable concerning $\xi$, and it is designed to be a guiding vector field for path following \cite{yao2022guiding}.
	
	In Fig.~\ref{fig_repulsiveAndreactiveArea}, the elements of the composite vector field are illustrated in detail. A reference path $\mathcal{P}$ is provided initially and may be occluded by obstacles, and it is defined by
	\begin{equation*}
		\mathcal{P}=\left\{\xi \in \mathbb{R}^2: \phi(\xi)=0\right\},
	\end{equation*}
	where $\phi:\mathbb{R}^2\rightarrow\mathbb{R}$ is twice continuously differentiable. For example, a circle path $\mathcal{P}$ can be described by choosing $\phi(x,y)=x^2+y^2-R^2$, where $R$ is the circle radius. To avoid collisions, Yao et al. \cite{yao2022guiding} proposed a composite vector field for processing obstacle constraints. It involves a reactive boundary $\mathcal{R}_i^t$ and a repulsive boundary $\mathcal{Q}_i^t$, i.e.,
	\begin{equation*}\small
		\begin{aligned}
			\mathcal{R}_i^t=\left\{\xi \in \mathbb{R}^2: \varphi_i(\xi, t)=0\right\}, \mathcal{Q}_i^t=\left\{\xi \in \mathbb{R}^2: \varphi_i(\xi, t)=c_i\right\},
		\end{aligned}
	\end{equation*}
	where $\varphi_i: \mathbb{R}^2 \times \mathbb{R} \to \mathbb{R}$ is twice continuously differentiable, $i\in\mathcal{I}=\{1,2,\cdots,m\}$, $m$ is the total number of obstacles, and $c_i<0$. The repulsive boundary $\mathcal{Q}_i^t$ is the boundary that tightly encloses the $i$-th obstacle at time $t$ and a robot is forbidden to cross this boundary to avoid collision with the obstacle. The reactive boundary $\mathcal{R}_i^t$ is larger than and encloses the repulsive boundary, and its interior is a region where a robot can detect an obstacle and become reactive. We use prescripts ${\rm{ex}}$ and ${\rm{in}}$ to denote the exterior and interior regions of a boundary, respectively. For example, $^{\rm{ex}}{\mathcal{Q}}$ represents the exterior region of the repulsive boundary (see Fig.~\ref{fig_repulsiveAndreactiveArea}). An example of moving circular reactive and repulsive boundaries can be characterized by choosing $\varphi_i(x,y;t)=(x-t)^2+y^2-R^2$ and letting $|c_i|<R$. In this case, the reactive and repulsive boundaries are large and small concentric circles moving along the $x$-axis as $t$ increases, respectively.

	\begin{figure}[!htpb]
		\centering\includegraphics[width=3.0in]{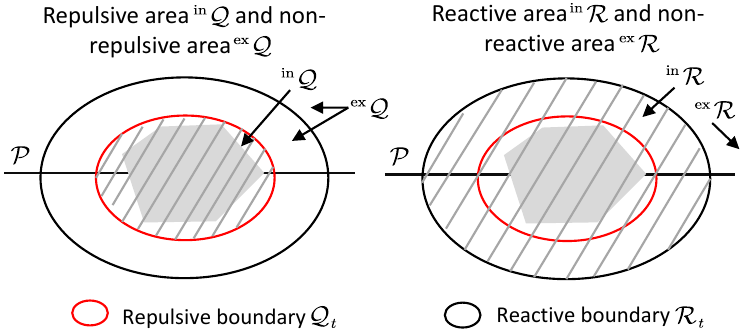}
		\caption{Component illustration of the composite vector field\cite{yao2022guiding}. The oblique lines construct a close area. The annulus area is the \emph{sandwiched region} $\mathcal{M}_s={}^{\mathrm{ex}} \mathcal{Q} \cap {}^{\mathrm{in}} \mathcal{R}$.}
		\label{fig_repulsiveAndreactiveArea}
	\end{figure}
	
	We denote the \emph{path-following vector field} by $\chi_{\mathcal{P}}$ and the \emph{repulsive vector field} by $\chi_{\mathcal{R}_i}$, and they are defined below:
	\begin{equation*}
		\begin{aligned}
			\chi_{\mathcal{P}}(\xi) &=\gamma_0 E \nabla \phi(\xi)-k_p \phi(\xi) \nabla \phi(\xi), \\
			\chi_{\mathcal{R}_i}(\xi) &=\gamma_i E \nabla \varphi_i(\xi)-k_{r_i} \varphi_i(\xi) \nabla \varphi_i(\xi), \;i \in \mathcal{I},
		\end{aligned}
	\end{equation*}
	where $E=\left[ \begin{smallmatrix}
		0&		-1\\
		1&		0\\
	\end{smallmatrix} \right] $ is the rotation matrix of $90^{\circ}$, $\gamma_i\in\{1,-1\}$, $i \in \mathcal{I}\cup\{0\}$, determines the moving direction (clockwise or counterclockwise), and $k_p, k_{r_i}$ are positive coefficients. The composite vector field is as follows\cite{yao2022guiding}:
	\begin{equation} \label{eq_yaocvf}
		\chi(\xi)=\left(\prod_{i \in \mathcal{I}} \sqcup_{\mathcal{Q}_i}(\xi)\right) \hat{\chi}_{\mathcal{P}}(\xi)+\sum_{i \in \mathcal{I}}\left(\sqcap_{\mathcal{R}_i}(\xi) \hat{\chi}_{\mathcal{R}_i}(\xi)\right),
	\end{equation}
	where $\hat{(\cdot)}$ is the normalization notation (i.e., for a nonzero vector $v\in\mathbb{R}^n$, $\hat{v}=v/\lVert v\rVert$), $\sqcup_{\mathcal{Q}}(\xi)=\frac{f_1(\xi)}{f_1(\xi)+f_2(\xi)}, \sqcap_{\mathcal{R}}(\xi)=\frac{f_2(\xi)}{f_1(\xi)+f_2(\xi)}$ are smooth \emph{bump functions},	where $f_1(\xi)= 0$ if $\varphi(\xi) \leq c$ and $f_1(\xi)=\exp \left(\frac{l_1}{c-\varphi(\xi)}\right)$ if $\varphi(\xi)>c$,  $f_2({\xi})=\exp \left(\frac{l_2}{\varphi(\xi)}\right)$ if $\varphi(\xi)<0$ and $f_2({\xi})=0$ if $\varphi(\xi)\geq0$,	and $l_1,l_2>0$ are coefficients for changing the decaying or increasing rate. Note that for simplicity, the subscripts $i$ of related symbols have been omitted above. These smooth bump functions blend parts of different vector fields and create a composite vector field for path following and collision avoidance; for more details, see \cite{yao2022guiding}. To understand the composite vector field intuitively, we illustrate the composite vector field~\eqref{eq_yaocvf} in Fig.~\ref{fig_IllustrationCVF}. For $\chi(\xi)$ in~\eqref{eq_yaocvf}, it is equal to $\hat{\chi}_{\mathcal{P}}(\xi)$, $\sqcup _{\mathcal{Q}}\left( \xi \right) \hat{\chi}_{\mathcal{P}}\left( \xi \right) +\sqcap _{\mathcal{R}}\left( \xi \right) \hat{\chi}_{\mathcal{R}}\left( \xi \right)$, and $\hat{\chi}_{\mathcal{R}}(\xi)$ within the three regions $^{\rm{ex}}{\mathcal{R}}$, $^{\rm{ex}}{\mathcal{Q}}\cap{}^{\mathrm{in}} \mathcal{R}$, and ${}^{\mathrm{in}} \mathcal{Q}$, respectively.
	
	\begin{figure}[!htb]
		\centering\includegraphics[width=2.0in]{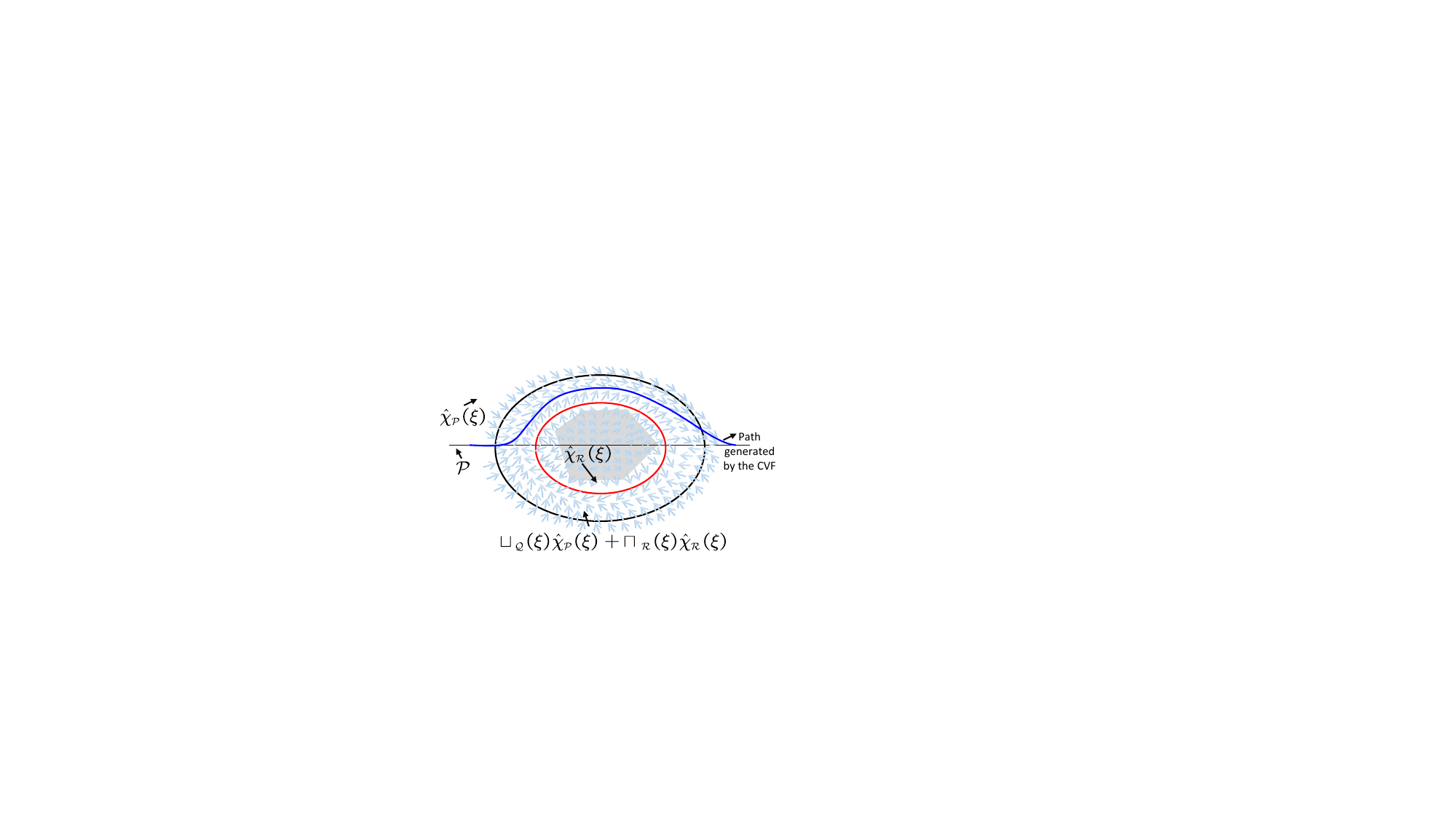}
		\caption{An illustration of the composite vector field~\eqref{eq_yaocvf}. We primarily illustrate the directions of the vector field in the vicinity of the path and directions partly within the interior region $^{\rm{in}}{\mathcal{Q}}$.}
		\label{fig_IllustrationCVF}
	\end{figure}
	
	The \emph{singular sets} of $\chi_{\mathcal{P}}$ and $\chi_{\mathcal{R}_i}$, denoted by $\mathcal{C}_{\mathcal{P}}$ and $\mathcal{C}_{\mathcal{R}_i}$, respectively, are defined below:
	\begin{equation*}
		\begin{aligned}
			\mathcal{C}_{\mathcal{P}} & =\left\{\xi \in \mathbb{R}^2: \chi_{\mathcal{P}}(\xi)=0\right\}=\left\{\xi \in \mathbb{R}^2: \nabla \phi(\xi)=0\right\}, \\
			\mathcal{C}_{\mathcal{R}_i} & =\left\{\xi \in \mathbb{R}^2: \chi_{\mathcal{R}_i}(\xi)=0\right\}=\left\{\xi \in \mathbb{R}^2: \nabla \varphi_i(\xi)=0\right\}.
		\end{aligned}
	\end{equation*}
	The elements of singular sets are called \emph{singular points}, where vector fields vanish. Due to the possible presence of singular points, special designs are required to solve the deadlock problem. When we employ the guiding vector field as high-level guiding signals, neglecting the kinodynamic constraints usually deteriorates the control performance. Therefore, in the subsequent sections, we will consider the kinodynamic constraints. 
	\subsection{Offline Deep Koopman Operator-based System Modeling}
	Consider the following continuous-time nonlinear system
	\begin{equation}\label{nonlienar_nonaffline_system}
		\dot{x}=f(x,u), 
	\end{equation}
	where $x\in\mathbb{R}^{n_x}$ denotes the system state, $f:\mathbb{R}^{n_x} \times \mathbb{R}^{n_u} \rightarrow\mathbb{R}^{n_x}$ is the system transition function, $u\in \Omega_u\subset\mathbb{R}^{n_u}$ denotes the control input, and $\Omega_u$ is the control constraint set. Note that the explicit dependence on time is dropped unless needed
	for clarity. We assume that $f(\cdot)$ is locally Lipschitz continuous.
	
	The discrete-time Koopman operator of~\eqref{nonlienar_nonaffline_system} can be described as follows:
	\begin{equation}
		\varUpsilon_{\infty}(x_{k+1}) = (\mathcal{K} \varUpsilon_{\infty})(x_k, u(x_k)),
	\end{equation}
	where $\mathcal{K}$ is an infinite-dimensional linear Koopman operator in a Hilbert space $\mathcal{H}$, and $\varUpsilon_{\infty}$ is the observable function. In \cite{xiao2023ddk}, $\mathcal{K}$ is approximated by $n_{\mathcal{K}}$-order system dynamics by using deep neural networks, i.e.,
	
	\begin{equation}\label{equ:latent_vehicle_dynamics}
		\varUpsilon(x_{k+1}) = A\varUpsilon(x_{k}) + Bu(x_k),
	\end{equation}
	where $A\in \mathbb{R}^{n_{\mathcal{K}}\times n_{\mathcal{K}}}$ and $B\in \mathbb{R}^{n_{\mathcal{K}}\times n_u}$ are latent system matrices, $\varUpsilon(x_k) = [x_k^{\top}, \rho_e^{\top}(x_k)]^{\top}$
	where $\rho_e: \mathbb{R}^{n_x} \to \mathbb{R}^{n_{\rho_e}}$ denotes the encoder module. 

	\subsection{Sparse GP Regression for Online Compensation}
	Next, we will review a sparse GP regression method called FITC \cite{snelson2005sparse}, which reduces computational complexity by selecting inducing samples and introduces a low-rank approximation of the covariance matrix, transforming the original GP model into an efficient one. It is briefly introduced in the following.
	\subsubsection{The formulation of full GP Regression}
	An independent training set is composed of state vectors, i.e., $\mathbf{z}=[z_1,z_2,\cdots,z_n]^{\top}\in\mathbb{R}^{n\times n_z}$ 
	and the corresponding output vectors $\mathbf{y}=[y_1,y_2,\cdots,y_n]^{\top}\in\mathbb{R}^{n\times n_{y}}$. In \cite{rasmussen2003gaussian}, the mean and variance functions of each output dimension $a\in\{1,\cdots,n_{y}\}$ at a test point $z=[x^\top,u^\top]^\top$ are computed by
	\begin{equation}\label{mean_covariance}
		\begin{aligned}
			&m^{a}_{d}=K^{a}_{z\mathbf{z}}(K^{a}_{\mathbf{zz}}+\sigma^2_{a}I)^{-1}[\mathbf{y}]_a,\\
			&\Sigma^{a}_{d}=K^{a}_{zz} -K^{a}_{z\mathbf{z}} \left( K^{a}_{\mathbf{zz}} +\sigma _{a}^{2}I \right) ^{-1}K^{a}_{\mathbf{z}z},\\
		\end{aligned}
	\end{equation}
	where $\sigma_{a}$ is the variance, $K_{\mathbf{zz}}^a=k^a(\mathbf{z},\mathbf{z})\in\mathbb{R}^{n_z\times n_z}$ is a Gram matrix containing variances of the training samples. Correspondingly, $K_{z\mathbf{z}}^a=(K_{\mathbf{z}z}^a)^{\top}=k^a(z,\mathbf{z})$ denotes the variance between a test sample and training samples, and $K_{zz}^a =k^a(z,z)$ represents the  covariance,
	$k^{a}(\cdot,\cdot)$ is the squared exponential kernel function and is defined as follows:
	\begin{equation}
		k^a\left(z_i, z_j\right)=\sigma_{f, a}^2 \exp (-1/2\left(z_i-z_j\right)^{\top} L_a^{-1}\left(z_i-z_j\right)),	
	\end{equation}
	where $\sigma_{f, a}^2$ is the signal variance and $L_a=\ell^2 I$. Here $\sigma_{f,a}$ and $\ell$ are hyperparameters of the covariance function.
	\subsubsection{Sparse GP Regression}
	Given an inducing dictionary set $\{\mathbf{z}_{\rm{ind}},\mathbf{y}_{\rm{ind}}\}$ with $n_{\rm{ind}}$ samples from $\{\mathbf{z},\mathbf{y}\}$, the prior hyper-parameters can be optimized by maximizing the marginal log-likelihood of the observed samples. In \cite{snelson2005sparse}, the mean and variance functions of a full GP are approximated by using inducing targets $\mathbf{y}_{\rm{ind}}$, inputs $\mathbf{z}_{\rm{ind}}$, i.e.,
	\begin{equation}\label{FITC}
		\begin{aligned}
			\tilde{m}_d^a(z)&=Q_{z \mathbf{z}}^a(Q_{\mathbf{z} \mathbf{z}}^a+\Lambda)^{-1}[\mathbf{y}]_{a}, \\
			\tilde{\Sigma}_d^a(z)&=K_{z z}^a-Q_{z \mathbf{z}}^a(Q_{\mathbf{z} \mathbf{z}}^a+\Lambda)^{-1} Q_{\mathbf{z} z}^a,
		\end{aligned}
	\end{equation}
	where $\Lambda=K_{\mathbf{z}\mathbf{z}}^a-Q_{\mathbf{z}\mathbf{z}}^a+\sigma_a^2I$ is diagonal and the notation $ Q_{\zeta \tilde{\zeta}}^a:=K_{\zeta \mathbf{z}_{\rm{ind}}}^a(K_{\mathbf{z}_{\rm{ind}} \mathbf{z}_{\rm{ind}}}^a)^{-1} K_{\mathbf{z}_{\rm{ind}} \tilde{\zeta}}^a$. Several matrices in~\eqref{FITC} do not depend on $z$ and can be precomputed, such that they only need to be updated when updating $\mathbf{z}_{\rm{ind}}$ or $\mathcal{D}$ itself.
	
	Finally, a multivariate GP is established by
	\begin{equation}
		d(z) \sim \mathcal{N}(\tilde{m}_d,\tilde{\Sigma}_{d}),
	\end{equation}
	where $\tilde{m}_d=[\tilde{m}_d^1,\cdots,\tilde{m}_d^{n_y}]^{\top}$, and $\tilde{\Sigma}_{d}=\text{diag}\{\tilde{\Sigma}_{d}^1,\cdots,\tilde{\Sigma}_{d}^{n_y}\}$.
	\subsection{Problem Formulation}

	\subsubsection{Composite Vector Field with Kinodynamic Constraints}
	The composite vector field acts as a local path planner and should satisfy the kinodynamic constraint, leading to the problem of Vector-Field-guided Trajectory
	Planning with Kinodynamic Constraint (VF-TPKC), which is formulated in Definition~\ref{definition1}. {This problem is decomposed into two components. In the presence of obstacles obstructing the desired path, the planning method should ensure the safety of paths, i.e., avoiding collision with obstacles. Then, the issue of satisfying dynamic constraints arises, involving improvements upon the path planning method established in the first step.}
	\begin{remark}
		The term ``kinodynamic constraint" refers to the requirement that a robot will not collide with obstacles at different speeds. To address this issue, we have transformed it into the fulfillment of the maximum lateral acceleration. $\hfill\blacktriangleleft$
	\end{remark}
	\begin{definition}\label{definition1}
		({VF-TPKC}) Design a continuously differentiable vector field $\chi:\mathbb{R}\times\mathbb{R}^2\rightarrow\mathbb{R}^2$ for $\xi(t)=\chi(t,\xi(t))$  such that:
		\begin{itemize}
			\item[1)] It achieves path-following and collision avoidance. In addition, the path-following error is bounded, and no deadlocks exist.
			\item[2)] Given the robot's velocities $v_x$, $v_y$, and the maximum centripetal acceleration $a_{\rm{max}}$, it holds that $({v_x^2+v_y^2})\kappa(t)\le a_{\rm{max}}$ for $t>0$ and $\kappa(t)$ is the curvature at time t. .
		\end{itemize}
	\end{definition}
	
	A guiding vector field $\chi:\mathbb{R}\times\mathbb{R}^2\rightarrow\mathbb{R}^2$ is designed to generate a continuously differentiable reference path, which is obtained by
	\begin{equation*}
		\begin{array}{l}
			\Xi=\int_0^\infty \chi(\xi(t))\rm{d}t.
		\end{array}
	\end{equation*}
	
	Subsequently, we employ a learning-based predictive control approach to track the desired trajectories and avoid dynamic obstacles at the same time.
	\subsubsection{Optimal Motion Planning to Avoid Moving Obstacles}\label{Problem2}
	Given the offline learned system~\eqref{equ:latent_vehicle_dynamics}, it is feasible to use it to design optimal IMPC. However, the interaction environments may be time-varying, causing the system dynamics to be uncertain. We can rewrite the exact system dynamics as a data-driven Koopman model adding an uncertain part by
	\begin{equation}\label{equ:latent_vehicle_dynamics_uncertain}
		\varUpsilon(x_{k+1}) = \underbrace{A\varUpsilon(x_{k}) + Bu(x_k)}_{f_{\text{nom}}(\varUpsilon(x_k),u(x_k))} +B_s\underbrace{\left( g(\varUpsilon(x_k),u(x_k))+w_k \right)}_{y_k},
	\end{equation}
	where the above model consists of a known nominal part $f_{\text{nom}}$ and an additive term $y_k$, which lies within the subspace spanned by $B_s$\cite{hewing2019cautious}. We assume that the process noise $w_k \sim \mathcal{N}\left(0, \Sigma^w\right)$ is independent and identically distributed (i.i.d.), with spatially uncorrelated properties, i.e., $\Sigma^w=\text{diag}\{\sigma_1^2, \ldots, \sigma_{n_y}^2\}$, where $n_y$ denotes the dimension of $y_k$.
	
	Assuming that the desired trajectory can be denoted by
	\begin{equation}\label{nominal_desired_path}
		\varUpsilon(x_{r,k+1}) = A\varUpsilon(x_{r,k}) + Bu(x_{r,k}),
	\end{equation}
	the subtraction of Eq.~\eqref{nominal_desired_path} from Eq.~\eqref{equ:latent_vehicle_dynamics_uncertain} yields the following error model, i.e.,
	\begin{equation}\label{discrete-time-nonlinear-system-dynmaics}
		\tilde{x}_{k+1}=A\tilde{x}_k + B\tilde{u}(x_k) +B_s\underbrace{\left( g(\varUpsilon(x_k),u(x_k))+w_k \right)}_{y_k},
	\end{equation}	
	where $\tilde{x}_k=\varUpsilon(x_k)-\varUpsilon(x_{r,k})$ is the error state, $x_{r,k}$ is the reference state, and $\tilde{u}(x_k)=u(x_k)-u(x_{r,k})$ is the control input. 
	
	We formally define the Optimal Motion Planning (OMP) problem, which consists of two subproblems. The first subproblem is the tracking control problem. To formulate this subproblem, we first define the value function as the cumulative discounted sum of the infinite-horizon costs:
	\begin{equation}\label{value_function}
		V_{\infty}(\tilde{x}_k)=\sum_{\tau=k}^{\infty}\gamma^{\tau-k}L(\tilde{x}_\tau,\tilde{u}(\tilde{x}_\tau)),
	\end{equation}
	where $0<\gamma\le1$, $L(\tilde{x}_\tau, \tilde{u}_\tau)=\tilde{x}_\tau^{\top}Q\tilde{x}_\tau+\tilde{u}^\top(\tilde{x}_\tau) R\tilde{u}(\tilde{x}_\tau)$ is the cost function, $Q\succeq 0\in\mathbb{R}^{n_{\mathcal{K}}\times n_{\mathcal{K}}}$ is positive semi-definite, and $R\succ 0\in\mathbb{R}^{n_u\times n_u}$ is positive definite.
	The second subproblem is how the robot can avoid moving obstacles.
	Combining the two subproblems, we formulate the optimal OMP as below:
	\begin{problem}
		({{OMP}}) Design an optimal IMPC for the robot with uncertain system dynamics such that it
		\begin{itemize}
			\item[{C.1}:] Starts at $x_0\in\mathbb{R}^{n_x}$ and tracks the reference path $\Xi$ by minimizing the value function $V_{\infty}(\tilde{x}_k)$. 
			\item[{C.2}:] Avoids collisions with all obstacles $\mathcal{B}_1,\dots, \mathcal{B}_q\subseteq\mathcal{W}\subseteq\mathbb{R}^2$, where $\mathcal{W}$ denotes the workspace.
		\end{itemize}
	\end{problem}
	
	\section{Vector Field Guided Receding Horizon Reinforcement Learning for Mobile Robots with Uncertain System Dynamics}\label{GuidingRHRL}
	It is essential to generate local collision-free trajectories for guiding robots' motion in obstacle-dense scenarios, which could improve safety and simplify the design of RL-based IMPC. Motivated by this aspect, we design a guiding vector field that considers dynamic constraints and excludes the deadlock problem (i.e., singular points). This part is illustrated by the \emph{safety guiding module} in Fig.~\ref{fig_DRHACL_VehicleControl}. Considering safety when robots track the desired trajectory, we must deal with the movements of (suddenly appearing) moving obstacles and the uncertainties of the nominal deep Koopman model. To this end, we develop an online receding-horizon reinforcement learning (RHRL) approach that employs a game-based exponential barrier function and a fast model compensation scheme. This part is illustrated by the \emph{learning predictive control module} of Fig.~\ref{fig_DRHACL_VehicleControl}. The details of each module and its sub-modules will be illustrated in the following subsections.

	\begin{figure*}[!htb]
		\centering\includegraphics[width=7.0in]{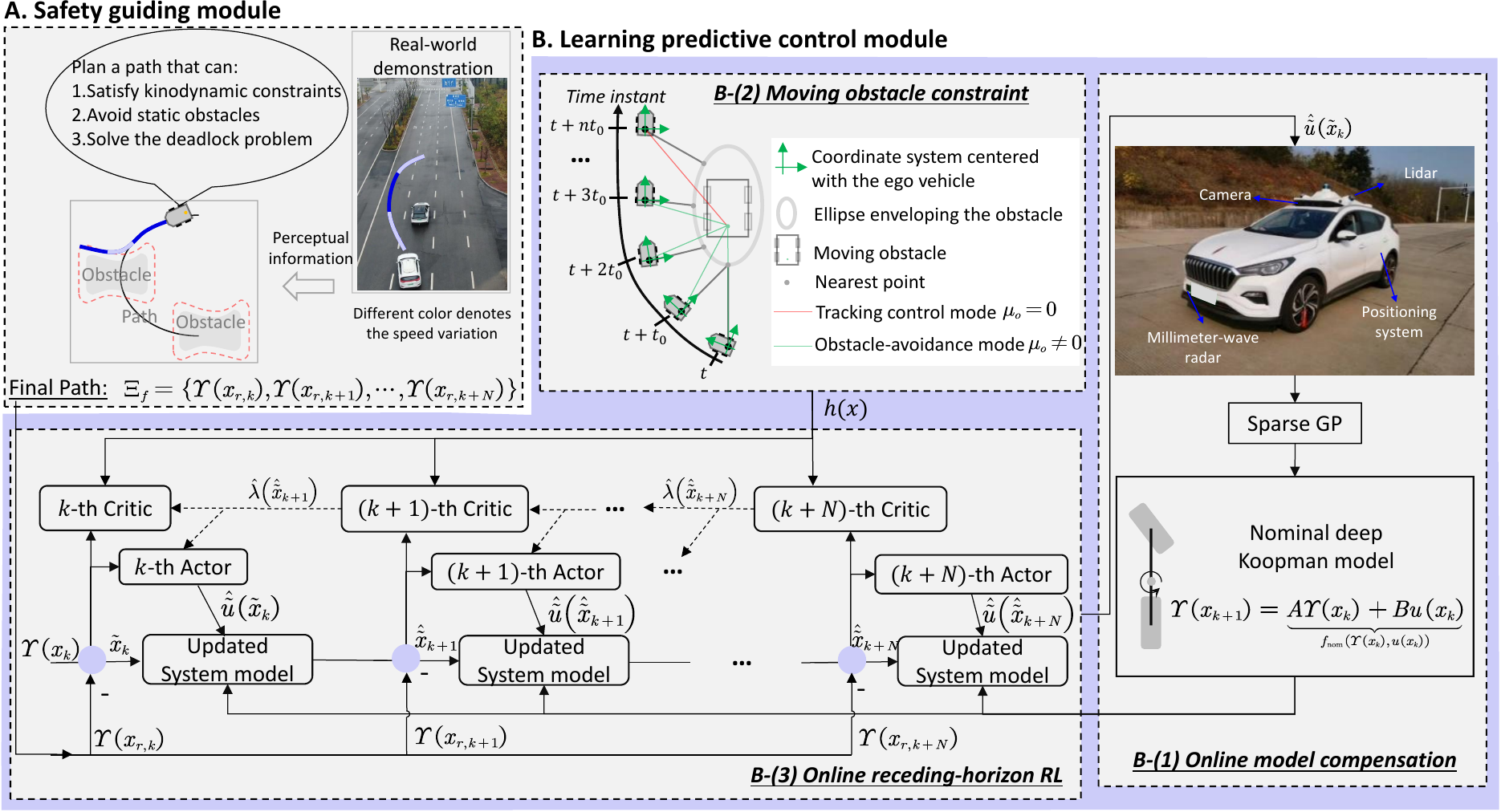}
		\caption{The overall framework of the VF-LPC algorithm.}
		\label{fig_DRHACL_VehicleControl}
	\end{figure*}
	\subsection{Discrete-time Kinodynamic Composite Vector Field}\label{SFSCVF}
	In this subsection, we present a discrete-time kinodynamic composite vector field to generate locally feasible trajectories, corresponding to module A of Fig.~\ref{fig_DRHACL_VehicleControl}.
	\subsubsection{Composite Vector Field with Kinodynamic Constraints}
	The first objective in Definition 1 can be achieved by the composite vector field~\eqref{eq_yaocvf}. 
	To accomplish the second objective in Definition \ref{definition1}, we first design the following kinodynamic composite vector field based on~\eqref{eq_yaocvf}:
	\begin{equation}\label{guiding_vector_dynamic_constraint} 
		\resizebox{0.85\hsize}{!}{$
			\begin{aligned}
				\chi_c(\xi)=&\left(\prod_{i \in \mathcal{I}\backslash\mathcal{I}'} \sqcup_{\mathcal{Q}_i}(\xi)\prod_{i \in \mathcal{I}'}s_i(\xi)\right) \hat{\chi}_{\mathcal{P}}(\xi)\\
				&+\sum_{i \in \mathcal{I}\backslash\mathcal{I}'}\left(\sqcap_{\mathcal{R}_i}(\xi) \hat{\chi}_{\mathcal{R}_i}(\xi)\right)+\sum_{i \in\mathcal{I}'}\left(1-{s_i}(\xi)\right) \hat{\chi}_{\mathcal{R}_i}(\xi),
			\end{aligned}$}
	\end{equation}
	where $\mathcal{I}'$ is a set containing the index numbers of manually added \emph{virtual obstacles}, and $s_i: \mathbb{R}^2 \to \mathbb{R}$ is a function to be designed later.		
	The path generated by the original composite vector field in \eqref{eq_yaocvf} would often require the robot to make large turns within a limited distance for collision avoidance. The role of \emph{virtual obstacles} here is to proactively modify the vector field such that the curvature of the robot trajectory is less than the maximum allowable value as the robot enters the \emph{sandwiched region} 
	\begin{equation}
		\mathcal{M}_s={}^{\mathrm{ex}} \mathcal{Q}_{\mathrm{actl}} \cap {}^{\mathrm{in}} \mathcal{R}_{\mathrm{actl}},
	\end{equation}
	(i.e., the area sandwiched between the repulsive and the reactive boundaries; see the white annulus region in Fig.~\ref{fig_safe_composite_field}), thereby satisfying the dynamic constraints. The definitions of the reactive and repulsive boundaries are expressed through the function $\varphi_i(\xi),i\in\mathcal{I}'$. For example, the reactive boundary $\mathcal{R}_{\rm{vrtl}}$ corresponding to a virtual obstacle is described by $\{\xi \in \mathbb{R}^2 : \varphi_i(\xi)=0\},i\in\mathcal{I}'$ in Fig.~\ref{fig_safe_composite_field}, and its repulsive boundary $\mathcal{Q}_{\rm{vrtl}}$ is described by $\{\xi \in \mathbb{R}^2 : \varphi_i(\xi)=c_i\},i\in\mathcal{I}'$. 
	When the robot's position $\xi=(X,Y) \in \mathbb{R}^2$ enters the virtual reactive region ${}^{\mathrm{in}}\mathcal{R}_{\rm{vrtl}}$, it will be attracted towards the virtual repulsive boundary $\mathcal{Q}_{\rm{vrtl}}$. This provides a direction change before $\xi$ enters the actual reactive region ${}^{\mathrm{in}}\mathcal{R}_{\rm{actl}}$ corresponding to the actual obstacle, and the virtual obstacle will not affect its motion after $\xi$ enters the actual reactive region ${}^{\mathrm{in}}\mathcal{R}_{\rm{actl}}$. Based on the above analyses, within/outside the \emph{buffer region} (i.e., the shaded area in Fig.~\ref{fig_safe_composite_field})
	\begin{equation}
		\mathcal{M}_b = {}^{\mathrm{ex}} \mathcal{R}_{\text{actl}}\cap {}^{\mathrm{in}} \mathcal{R}_{\text{vrtl}},
	\end{equation}
	the function ${s_i}(\xi):\mathbb{R}^2 \to \mathbb{R}, i\in\mathcal{I}'$ is designed to be 
	\begin{equation}\label{S_xi}
		{s_i}(\xi)= \begin{cases}  \exp \left(\frac{k_{c,i}}{c_i-\varphi_i(\xi)}\right) & \xi\in \mathcal{M}_b,\\ 1 & \text{otherwise},\end{cases}
\end{equation}
where the adjustable coefficient $k_{c,i}>0$ is used to change the convergence rate to $\mathcal{Q}_{\rm{vrtl}}$, and $\exp(\cdot)$ denotes the exponential function. Thus, in this design, $0<{s_i}(\xi)<1$ if $\xi\in\mathcal{M}_b$; ${s_i}(\xi)=1$, otherwise.

\begin{remark}\label{assumption:virtualobstacle}
The placed virtual obstacle is assumed to satisfy $({}^{\mathrm{in}}\mathcal{Q}_{\rm{vrtl}}\cap\mathcal{P})\subset({}^{\mathrm{in}}\mathcal{R}_{\rm{actl}}\cap\mathcal{P})$. The assumption is used for letting $\xi$ exit the {sandwiched region} $\mathcal{M}_s$ from $\mathcal{M}_b$, but not converging to $\mathcal{Q}_{\rm{vrtl}}$ when robots are in $\mathcal{M}_b$. $\hfill\blacktriangleleft$
\end{remark}
\begin{remark}
Virtual obstacles cease to exert their influence if $\xi$ enter $\mathcal{M}_b$ again from  $\mathcal{M}_s$. This setting stops robots from returning to $\mathcal{M}_s$, but enables them to move towards the desired path to complete an obstacle avoidance process.$\hfill\blacktriangleleft$
\end{remark}
Now, the guidance path generated by the vector field $\chi_c(\xi)$ can avoid rapidly increasing curvature. To satisfy the maximum centripetal acceleration
$a_{\rm{max}}$, \emph{speed planning} is further performed for the path $\Xi_f=\int_0^T \chi(\xi(t))\rm{d}t$ under a given desired speed $v_d$, where $T>0$ determines the time duration. The maximum allowable speed is $(a_{\rm{max}}/\kappa(t))^{1/2}$. We can perform \emph{speed planning} by the following strategy: If $v_d> (a_{\rm{max}}/\kappa(t))^{1/2}$, then set the speed at $\xi(t)$ to $(a_{\rm{max}}/\kappa(t))^{1/2}$; otherwise, set the speed at $\xi(t)$ to $v_d$.

\begin{figure}[h]
\centering\includegraphics[width=2.5in]{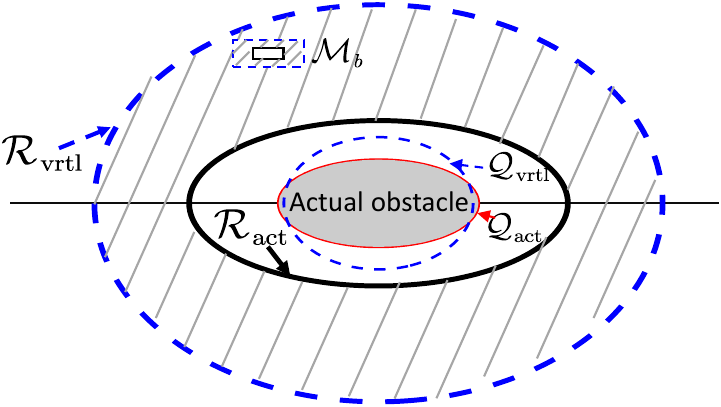}
\caption{An illustrative example of the kinodynamic composite vector field. The gray-shaded ellipse is the actual obstacle, and its corresponding repulsive and reactive boundaries are the red and black solid elliptic curves, respectively. The small and large blue circles with dashed lines represent the repulsive and reactive boundaries for a manually added virtual obstacle. The black curve is the desired path.}
\label{fig_safe_composite_field}
\end{figure}
\subsubsection{Analysis of the Composite Vector Field}
The composite vector field \eqref{guiding_vector_dynamic_constraint} gives an idea of how to accommodate the kinodynamic constraint. 
By redesigning the characterizing functions $\phi$, $\varphi_i$, the bump functions, and the coefficients $c_i$, the first condition of Definition~\ref{definition1} can be met in practice\cite{yao2022guiding}. The second condition can be satisfied by selecting proper positive coefficient $k_{c,i}, i\in\mathcal{I}'$. These two conditions are satisfied in simulations and experiments in this paper.
\subsubsection{Discrete-time Kinodynamic Guiding Vector Field} \label{subsec_dtgvf}
With $\chi_c(\xi)$ defined in~\eqref{guiding_vector_dynamic_constraint}, the desired path is computed by its discrete-time form, i.e.,
\begin{equation}\label{xi_r}
\xi_{k+1}= \xi_{k}+\beta\chi_c(\xi_{k, grid}), \quad \xi_0=(x_0,y_0),
\end{equation}
where $\xi_0\in\mathbb{R}^2$ is the initial robot position and $\beta$ is the step length, and $\xi_{k,grid} = \arg\min_{\xi_{g} \in \mathcal{G} } || \xi_k - \xi_g ||$, where the set $\mathcal{G} \subseteq \mathbb{R}^2$ is the grid map consisting of a finite number of chosen points in a selected region. In terms of computational efficiency, it is advisable to precompute $\chi_c(\xi)$ on a mesh grid map. This makes it possible to quickly identify the nearest vector $\chi_c(\xi_{k, grid})$ on the mesh grip map based on the current position $\xi_k$. Finally, we can obtain the planned trajectory. Note that a vector with nearly zero norms is not selected as the current vector but rather inherits the vector from the previous state to avoid suffering from the singularity issue.  Namely, if $|| \chi_c(\xi_{k,grid}) || < \epsilon$, then  $\chi_c(\xi_{k,grid}) \leftarrow  \chi_c(\xi_{k-1,grid})$, where $\epsilon$ is a small positive number. 
%
{This allows us to plan guiding trajectories directly without causing the deadlock problem from singular points.}

\subsection{Online Compensation to Update the Offline Trained Deep Koopman Model}\label{3B1} 
In this subsection, we present an online compensation method to update the offline trained deep Koopman model since \textbf{\emph{model uncertainties}} exist. This subsection corresponds to module B-(1) of Fig.~\ref{fig_DRHACL_VehicleControl}.

To construct the ``input-output" form of a GP, Eq.~\eqref{equ:latent_vehicle_dynamics_uncertain} is rewritten as
\begin{equation}\label{error_model}
{d}(z_k) = B_d^{\dagger}(\varUpsilon({x}_{k+1}) -f_{\text{nom}}\left(\varUpsilon({x}_k),{u}(x_k) \right)),
\end{equation}
where $B_d^{\dagger}$ is the Moore–Penrose pseudoinverse of $B_d$. 
\begin{remark}
Due to the necessity for efficient computation in the online compensation of vehicle dynamic models, we further apply the approximate linear dependence (ALD) strategy\cite{10271561} to quantize the online updated dictionary $\mathcal{D}_{\rm{GP}}$ for obtaining the training dataset $\mathcal{D}_{\rm{SGP}}$. Note that $\mathcal{D}_{\rm{GP}}$ is obtained in the same way as \cite{kabzan2019learning}. Subsequently, we employ the Sparse GP-based algorithm FITC \cite{snelson2005sparse} for online estimation to enhance computational efficiency. This approach aligns with our previous work, and additional details can be found in\cite{10271561}.$\hfill\blacktriangleleft$
\end{remark}

With the optimized parameters, GP models compensate for model uncertainties. Therefore, we define the learned model of~\eqref{equ:latent_vehicle_dynamics_uncertain} as
\begin{equation}\label{Dynamics_for_planning}
\varUpsilon({x}_{k+1})=f_{\text{nom}}(\varUpsilon({x}_k), {u}(x_k))+{d}(z_k)+\epsilon,
\end{equation}
where $\epsilon\in\mathbb{R}^{n_{\mathcal{K}}}$ is the estimation error. {The reference trajectory can be expressed as $\varUpsilon(x_{r,k+1})=f_{\text{nom}}(\varUpsilon({x}_{r,k}), {u}(x_{r,k}))$}.

We can obtain the Jacobian matrix of~\eqref{Dynamics_for_planning} at a reference state $(\varUpsilon({x}_{r,k}), {u}(x_{r,k}))$, i.e.,
\begin{equation}\label{JocabianMatrices}
A_{d,k}=A+\frac{\partial {d}(z_k)}{\partial \varUpsilon({{x}_k})}, B_{d,k}=B+\frac{\partial {d}(z_k)}{\partial {{u}_k}},
\end{equation}
where $A_{d,k}\in\mathbb{R}^{n_{\mathcal{K}}\times n_{\mathcal{K}}}$, $B_{d,k}\in\mathbb{R}^{n_{\mathcal{K}}\times n_u}$. 
Note that $A_{d,k}$ and $B_{d,k}$ are then used for the training process of model-based RL in Section~\ref{SecIII-B-3}. To obtain $\frac{\partial {d}(z_k)}{\partial\varUpsilon({{x}_k})}$ and $\frac{\partial {d}(z_k)}{\partial {{u}_k}}$, one should compute $\frac{\partial {d}(z_k)}{\partial {{z}_k}}$ first, which is equivalent to calculating $\partial\left(Q_{z_k\mathbf{z}_{\rm{sp}}}(Q_{\mathbf{z}_{\rm{sp}}\mathbf{z}_{\rm{sp}}}+\Lambda)^{-1}\mathbf{y}_{\rm{sp}}\right)/\partial z_k$, i.e., 
\begin{equation}
\begin{aligned}
	\dfrac{\partial {d}(z_k)}{\partial {{z}_k}}=\dfrac{1}{\ell^2}(\mathbf{z}_{\rm{sp}}-[{z}_k\otimes\mathds{1}])\left(Q_{z_k \mathbf{z}_{\rm{sp}}} \odot(Q_{\mathbf{z}_{\rm{sp}} \mathbf{z}_{\rm{sp}}}+\Lambda)^{-1}\mathbf{y}_{\rm{sp}}\right),
\end{aligned}
\end{equation}
where $\mathbf{z}_{\rm{sp}}\in\mathbb{R}^{n_z\times n_{\rm{sp}}}$ and $\mathbf{y}_{\rm{sp}}\in\mathbb{R}^{n_{\rm{sp}}\times n_{y}}$ denotes the input and output of the training samples $\mathcal{D}_{\rm{SGP}}$, respectively, $\mathds{1}$ denotes an $n_{\rm{sp}}$-dimensional row vector  consisting entirely of $1$. Ultimately, this yields matrices $\frac{\partial {d}(z_k)}{\partial \varUpsilon{({x}_k})}$ and $\frac{\partial {d}(z_k)}{\partial {{u}_k}}$.
Then an error model derived from~\eqref{discrete-time-nonlinear-system-dynmaics} is given below by assuming that the estimation error is sufficiently small, i.e.,
\begin{equation}\label{identification_form_dynmaic}
\hat{\tilde{x}}_{k+1}=f_{\text{nom}}(\tilde{x}_k,\tilde{u}_k(\tilde{x}_k)) + d({z_k}).
\end{equation}
\subsection{Exponential Barrier Function Incorporated Cost Function to Avoid Moving Obstacles}\label{SwitchingBarrierFunction}	
In this subsection, we design a cost function incorporated with an exponential barrier function to deal with the \textbf{\emph{moving obstacle constraint}}, corresponding to module B-(2) of Fig.~\ref{fig_DRHACL_VehicleControl}.

Note that {C.2} in the {{OMP}} Problem can cause safety issues due to the obstacles' movements. Consistent with \cite{10271561}, an implementable barrier function used for safety is designed as 
\begin{equation}\label{barrierfunction}
h\left( x \right) =\mu_o\exp \left( -\lVert \pi(x,x_p) \rVert \right),
\end{equation}
where $\mu_o\ge0$ is the penalty coefficient,  $x_p\in\mathbb{R}^2$ is the nearest reactive boundary coordinate from the nearest obstacle, and $\pi:\mathbb{R}^{n_x}\times\mathbb{R}^2\rightarrow\mathbb{R}$ maps $x$ and $x_p$ to the distance error. With $h(x)$, the robot keeps a safe distance from obstacles.

Due to the adoption of rule-based switching between tracking control and obstacle-avoiding modes in \cite{10271561}, without considering obstacle velocities, it does not guarantee the safety of mobile robots under extreme conditions such as the hazardous driving behavior of opposing vehicles. Our idea stems from the work\cite{exarchos2015suicidal} in completing suicidal game tasks, where the ``evader" aspires to avoid being caught by the ``pursuer" by computing safe actions. Similarly, when moving obstacles obstruct the road ahead, our strategy is to keep a safe distance from obstacles and track the desired trajectory if the ego robot cannot be caught. Integrating \cite{10271561} and \cite{exarchos2015suicidal}, the exponential barrier function comes into effect if the nearest surrounding obstacle would collide with the robot, where the judgment is generated by the method in \cite{exarchos2015suicidal}, as elaborated below.

\begin{figure}[!htb]
\centering\includegraphics[width=2.0in]{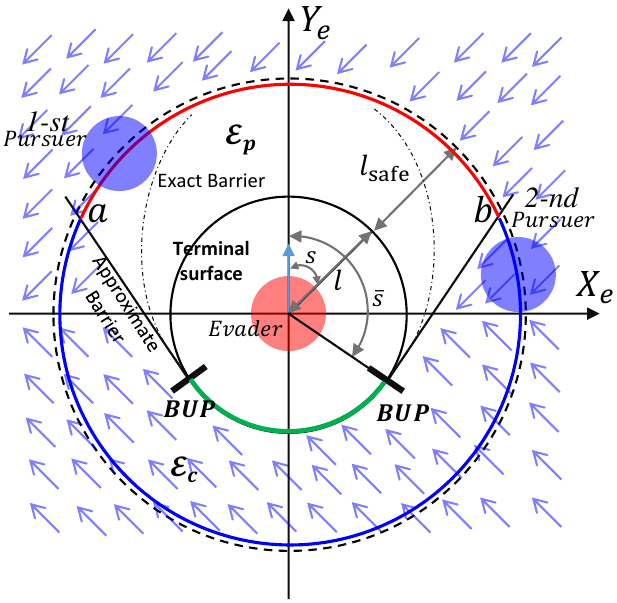}
\caption{Pursuer-Evader based game model. The barrier function comes into effect to ensure safety when the pursuer is situated within the $\mathcal{E}_p$ region.}
\label{fig_game}
\end{figure}

As shown in Fig.~\ref{fig_game}, a Cartesian coordinate system is established with the direction of the evader's velocity as the $Y$-axis, to facilitate the design of the safety-guaranteed switching mechanism. In this figure, $l_{\rm{safe}}$ denotes the preset maximum safety margin, and depends on the maximum tracking error of a controller, and $l$ is the safety distance defined by the physical constraints of the robot and obstacles. 
%
%
%
The terminal circle is divided into the \emph{usable part} (the black line) and the \emph{nonusable part} (the green line), separated by the boundary of the usable part (BUP). In \cite{exarchos2015suicidal}, the solution of BUP is determined by $\bar{s}=\arccos (-{v_p}/{v_e}),\ \bar{s} \in({\pi}/{2}, \pi]$,	where $v_p$ and $v_e$ are the speeds of the pursuer and the evader, respectively. The \emph{barrier} implies that the optimal play by both agents starting from any point will generate a path that does not penetrate this surface. The BUP connects the barrier and meets the terminal surface at the BUP tangentially. Therefore, as shown in Fig.~\ref{fig_game}, we consider the plane formed by points $a$, $b$, and the points at BUP as a conservative barrier to ensure safety.  
Fig.~\ref{fig_game} shows that the coordinate plane is divided into two regions: $\mathcal{E}_p= \mathcal{E}_{p,1}\cap\mathcal{E}_{p,2}$ and $\mathcal{E}_c= \mathcal{E} \setminus \mathcal{E}_p$, where $\mathcal{E}_{p,1}$ and $\mathcal{E}_{p,2}$ are defined below:
\begin{equation*}
\begin{aligned}
	&\mathcal{E}_{p,1}=\left\{ \zeta=[X_e,Y_e]^\top \in \mathbb{R}^2:\lVert \zeta\lVert \le l+l_{\text{safe}} \right\}\\
	&\mathcal{E}_{p,2}=\mathcal{H}_1 \cup \mathcal{H}_2 \cup \mathcal{H}_3, 
\end{aligned}
\end{equation*}
with
\begin{equation*}
\begin{aligned}
	&\mathcal{H}_1 = \{[X_e,Y_e]^\top : X_e\le -l\sin \bar{s}, \; Y_e\ge \tan (\bar{s})X_e+l/\cos \bar{s}\} \\
	&\mathcal{H}_2 = \{[X_e,Y_e]^\top : X_e\ge l\sin \bar{s}, \; Y_e\ge -\tan (\bar{s})X_e+l/\cos \bar{s}\} \\
	&\mathcal{H}_3 = \{[X_e,Y_e]^\top : X_e \in (-l\sin\bar{s},l\sin \bar{s}), \; Y_e\ge -\sqrt{l^2-X_e^2}\}.
\end{aligned}
\end{equation*}
%

The switching mechanism is elucidated more clearly below: In the area $\mathcal{E}_p$, $\mu_o\neq 0$ in \eqref{barrierfunction} is set to keep a safe distance from the obstacle, while in the region $\mathcal{E}_c$, we let $\mu_o=0$ to track the planned preliminary path.
To satisfy the conditions {C.1} and {C.2} in the {{OMP}} Problem, the step cost function at the $k$-th stage is re-designed as
\begin{equation}\label{cost_barrier}
\begin{aligned}
	L(\tilde{x}_k, \tilde{u}_k)=\tilde{x}_k^{\top}Q\tilde{x}_k+\tilde{u}_k^\top R\tilde{u}_k+h(x_k).
\end{aligned}
\end{equation}
{Equipped with the autonomous switching mechanism, the cost function is designed to learn motion planning policies.}


\subsection{Online Receding-horizon Reinforcement Learning with a Game-based Exponential Barrier Function Design}\label{SecIII-B-3}
In this subsection, we design a receding-horizon reinforcement learning with a game-based exponential barrier function, corresponding to module B-(3) of Fig.~\ref{fig_DRHACL_VehicleControl}.
In the previous subsection, the game-based exponential barrier function has been incorporated into the cost function to establish a constraint-free optimization problem, which is to be solved online in this subsection by the Receding-Horizon Reinforcement Learning (RHRL) approach. For this reason, we call it Exponential barrier function-based RHRL (Eb-RHRL).

To minimize the infinite-horizon value function $V_{\infty}(\tilde{x}_k)$, we first present the definitions of the optimal solution (control) and value functions commonly used in infinite-horizon ADP and RL.
As a class of ADP methods, the dual heuristic programming (DHP) approach aims to minimize the derivative of the value function with respect to the state. Consistent with the DHP-based framework, we re-define the value function as
\begin{equation}\label{optimal_lambda1}
\lambda_{\infty}(\tilde{x}_{k}) = \partial V_{\infty}(\tilde{x}_k)/\partial \tilde{x}_{k}.
\end{equation}

\begin{definition}
The optimal value function corresponding to the \textbf{infinite-horizon} optimization objective is defined by taking~\eqref{value_function} into~\eqref{optimal_lambda1}, i.e,
\begin{equation}\label{optimal_inftyVF}
	\lambda_{\infty}^*\left(\tilde{x}_k\right) 
	=2Q\tilde{x}_{k}+ \gamma A_{d,k}^\top \lambda^*(\hat{\tilde{x}}_{k+1})+\dfrac{\partial h({x}_k)}{\partial \tilde{x}_k}.
\end{equation}

Then, the optimal solution is defined by the following equation:
\begin{equation}\label{optimal_control_problem_inftyVF}
	\tilde{u}^*(\tilde{x}_k)=\arg\min_{\tilde{u}(\tilde{x}_k)}\sum_{\tau=k}^{\infty}\gamma^{\tau-k}L(\hat{\tilde{x}}_\tau,\tilde{u}(\hat{\tilde{x}}_\tau)),
\end{equation}
where $\gamma\in\left(0,1\right]$ is the discount factor.
\end{definition}
Note that the optimal solution and value function can be solved under the iterative learning framework\cite{10271561}. However, the online convergence property is subject to learning efficiency.
Motivated by the concept of receding-horizon from MPC, as discussed in~\cite{xu2018learning}, we divide $V_{\infty}(\tilde{x}_k)$ into multiple sub-problems within the prediction horizon $[k,k+N]$, where $N$ is the prediction horizon length. The value function in the prediction horizon is expressed by
\begin{equation}\label{performance_index}
\left\{\begin{aligned}
	&V(\hat{\tilde{x}}_{\tau})=\mathbb{E}\left[F(\hat{\tilde{x}}_{k+N})+\sum_{j=\tau}^{k+N-1} L(\hat{\tilde{x}}_j, \tilde{u}(\hat{\tilde{x}}_{j}))\right], \\
	&V(\hat{\tilde{x}}_{k+N})=\mathbb{E}\left[F(\hat{\tilde{x}}_{k+N})\right],
\end{aligned}\right.
\end{equation}
where $\mathbb{E}[\cdot]$ denotes the expectation value, and $F(\hat{\tilde{x}}_{k+N})$ is the terminal cost function, defined by $F(\hat{\tilde{x}}_{k+N}) = \hat{\tilde{x}}_{k+N}^{\top} P \hat{\tilde{x}}_{k+N}$, where $P\in\mathbb{R}^{n_{\mathcal{K}}\times n_{\mathcal{K}}}$ is the terminal penalty matrix, which can be determined in the same way as \cite{xu2018learning}.

The value function $V(\hat{\tilde{x}}_{\tau})$ in Eq.~\eqref{performance_index} is written by
\begin{equation}\label{performance_index_no_iter}
	V(\hat{\tilde{x}}_{\tau})= \mathbb{E}\left[L(\hat{\tilde{x}}_\tau, \tilde{u}(\hat{\tilde{x}}_\tau))+\gamma V(\hat{\tilde{x}}_{\tau+1})\right].
\end{equation}

According to the Bellman’s optimality principle, the optimal policy $\tilde{u}^*$ minimizes $V(\hat{\tilde{x}}_{\tau})$ in the prediction horizons; i.e.,
\begin{equation}\label{optimal_value_func}\resizebox{0.43\textwidth}{!}{$
	\left\{\begin{aligned}
		&V^*(\hat{\tilde{x}}_{\tau})=\mathbb{E}\left[L(\hat{\tilde{x}}_\tau, \tilde{u}^*(\hat{\tilde{x}}_\tau))+\gamma V^*(\hat{\tilde{x}}_{\tau+1})\right], \tau\in[k+N-1]\\
		&V^*(\hat{\tilde{x}}_{k+N})=\mathbb{E}\left[F(\hat{\tilde{x}}_{k+N})\right].
	\end{aligned}\right.$}
\end{equation}
\begin{definition}
For the critic network, the optimal value function in the \textbf{finite horizon} is re-defined by $\partial V^*(\hat{\tilde{x}}_\tau)/\partial \hat{\tilde{x}}_{\tau}=0$, i.e,

\begin{equation}\label{optima_lambda}
	\resizebox{0.45\textwidth}{!}{$
		\lambda^*(\hat{\tilde{x}}_{\tau})=
		\left\{\begin{array}{cc}
			2Q\hat{\tilde{x}}_{\tau}+ \gamma A_{d,\tau}^\top \lambda^*(\hat{\tilde{x}}_{\tau+1})+\dfrac{\partial h(\hat{x}_\tau)}{\partial \hat{\tilde{x}}_\tau}, & \tau \in[k, k+N-1]\\
			2P\hat {\tilde{x}}_{k+N}+{\partial h({\hat{x}}_{k+N})}/{\partial \hat{\tilde{x}}_{k+N}}, & \tau=k+N
		\end{array}\right.
		$}
\end{equation}

With the optimal value function $\lambda^*(\hat{\tilde{x}}_\tau)$, one can compute the optimal solution by setting ${\partial V^*(\hat{\tilde{x}}_\tau)}/{\partial \tilde{u}(\hat{\tilde{x}}_\tau)}=0$. Then, for the actor network, the optimal solution in the \textbf{finite horizon} is defined by 
\begin{equation}\label{optimal_action}
	\tilde{u}^*(\hat{\tilde{x}}_\tau) = -\dfrac{1}{2}\gamma R^{-1}B_{d,\tau}^\top\lambda^*(\hat{\tilde{x}}_{\tau+1}), \tau\in[k,k+N-1].
\end{equation}
\end{definition}
\begin{remark}
	The optimal solution $\tilde{u}^*(\hat{\tilde{x}}_\tau)$ and the optimal value function $\lambda^*(\hat{\tilde{x}}_{\tau})$ are difficult to be solved analytically. This is because $\lambda^*(\hat{\tilde{x}}_{\tau+1})$ is not available; see Eqs.~\eqref{optima_lambda} and~\eqref{optimal_action}. In the following, the optimal policies of the actor and critic networks are learned iteratively.
\end{remark}
\begin{algorithm}[t]
\small
\caption{VF-LPC algorithm}
\label{algorithm_moving_obstacle}
\LinesNumbered
\SetKwFunction{FSum}{LPC}
\SetKwFunction{FSumm}{QuantizedSparseGP}
\SetKwFunction{FSummm}{Main()}
\SetKwComment{Comment}{//}{}
\KwIn{Obstacle information, robot's states, the desired path. }  

\KwOut{The final safe trajectory.}

Initialize ${W}_{c,1:N}^1,{W}_{a,1:N}^1$;

\SetKwProg{Fn}{Function}{:}{}
\Fn{\FSum}{
	Initialize the coefficient $\mu_0$ with $0$.
	
	\If{$(X,Y)\in\mathcal{E}_p$}{$\mu_o\leftarrow$ a positive real number.}
	
	\For{$i=1,\cdots, i_{\max}$ (maximuim iterations)}{
		
		\tcp{\textit{Prediction Horizon}}
		
		\For{ $\tau= k, \cdots, k+N$}{
			
			Compute $\hat{\tilde{u}}^i(\hat{\tilde{x}}_\tau)$ and $\hat{\lambda}^i(\hat{\tilde{x}}_\tau)$ by Eq.~\eqref{hat_u}.
			
			Update the system dynamics~\eqref{identification_form_dynmaic} and Jacobian matrices~\eqref{JocabianMatrices} with sparse GP.
			
			Compute $\epsilon_{c}^{i}(\hat{\tilde{x}}_\tau)$ and $\epsilon_{a}^{i}(\hat{\tilde{x}}_\tau)$ with~\eqref{epsilon_c} and~\eqref{epsilon_a}.
			
			Update the weights ${W}_{c,\tau}^{i}$ and ${W}_{a,\tau}^{i}$ with~\eqref{WaWc_update}.
			
			Apply $\hat{\tilde{u}}^i(\hat{\tilde{x}}_\tau)$ to the system~\eqref{discrete-time-nonlinear-system-dynmaics}.
		}
		
		\If {the weights converge}
		{
			${W}_{c,k:k+N}^1\leftarrow W_{c,k:k+N}^i$, ${W}_{a,k:k+N}^1\leftarrow W_{a,k:k+N}^i$.
			
			Return the \emph{sequence} $\{\hat{\tilde{u}}^i(\tilde{x}_k),\cdots,\hat{\tilde{u}}^i(\hat{\tilde{x}}_{k+N})\}$.
		}
		
	}
	
}

\textbf{end}

\SetKwProg{Fn1}{Function}{:}{}
\Fn{\FSumm}{
	Select a representative sample set $\mathcal{D}_{\rm{SGP}}$ from $\mathcal{D}_{\rm{GP}}$.
	
	Optimize the hyper-parameters for online compensation. 
}

\textbf{end}

\SetKwProg{Fn1}{Function}{:}{}
\Fn{\FSummm}{
	\While {not reaching the destination}{
		
		Get the current state $x_0$ of the system. 
		
		\If{reaching the end of the guiding trajectory}{
			
			Generate the guidance (the desired trajectory) for the LPC module with Eqs.~\eqref{guiding_vector_dynamic_constraint} and~\eqref{xi_r}.
			
		}
		
		\If{the online dictionary $\mathcal{D}_{\rm{GP}}$ has been collected}{Perform \FSumm.}
		
		\tcp{\textit{Control Horizon}}
		Perform \FSum and apply the 1st solution of the $\emph{sequence}$ to the real-world robot.
	}
}
\textbf{end}

\end{algorithm}

 At the $i$-th iteration, the two optimal policies of the actor and critic networks are approximated by two kernel-based-network structures, i.e.,
\begin{subequations}\label{hat_u}
\begin{align}\label{hat_u_lambda}
	\hat{\tilde{u}}^i(\hat{\tilde{x}}_\tau)&=({W}_{a,\tau}^i)^{\top}\Phi(\hat{\tilde{x}}_{\tau}),\\
	\label{hat_lambda}
	\hat{\lambda}^i(\hat{\tilde{x}}_\tau)&=({W}_{c,\tau}^i)^{\top}\Phi(\hat{\tilde{x}}_{\tau}),
\end{align}
\end{subequations}
where ${W}_{a,\tau}\in\mathbb{R}^{n_{\Phi}\times n_u}$ and ${W}_{c,\tau}\in\mathbb{R}^{n_{\Phi}\times n_{\mathcal{K}}}$ are the weights of the actor and critic networks, respectively, and $n_{\Phi}$ denotes the dimension of the \emph{RL training} dictionary, which is obtained by employing the ALD strategy\cite{10271561} to select representative samples from the robot's state space. 
For the current state $\hat{\tilde{x}}_\tau$, the basis function in Eq.~\eqref{hat_u} is constructed by
\begin{equation*}\label{MultiKernelFeature}
\Phi(\hat{\tilde{x}}_\tau)=\left[k(\hat{\tilde{x}}_\tau, c_{1}), \ldots, k(\hat{\tilde{x}}_\tau, c_{n_{\Phi}})\right]^{\top}\in\mathbb{R}^{n_{\Phi}},
\end{equation*}
where $c_1,\cdots, c_{n_{\Phi}}$ are elements in the \emph{RL training} dictionary and $k(\cdot,\cdot):\mathbb{R}^{n_{\mathcal{K}}}\times\mathbb{R}^{n_{\mathcal{K}}}\rightarrow\mathbb{R}$ is the Gaussian kernel function.

With the estimated value function in~\eqref{hat_lambda} and consistent with~\eqref{optimal_action}, we can obtain the target optimal solution at the $i$-th iteration and the $\tau$-th horizon by
\begin{equation}\label{approximated_optimal_action}
\tilde{u}^i(\hat{\tilde{x}}_\tau) = -\dfrac{1}{2}\gamma R^{-1}B_{d,\tau}^\top\hat\lambda^i(\hat{\tilde{x}}_{\tau+1}), \tau\in[k,k+N-1].
\end{equation}

Correspondingly,  for $\tau\in[k,k+N]$, the value function $V^{i+1}(\hat{\tilde{x}}_{\tau})$ in Eq.~\eqref{performance_index} is written by
\begin{equation}\label{performance_index1}
		V^{i+1}(\hat{\tilde{x}}_{\tau})= \mathbb{E}\left[L(\hat{\tilde{x}}_\tau, \tilde{u}^i(\hat{\tilde{x}}_\tau))+\gamma V^{i}(\hat{\tilde{x}}_{\tau+1})\right].
\end{equation}

Note that $\hat{\tilde{x}}_{\tau+1}$ is obtained by the identified system dynamics~\eqref{identification_form_dynmaic}. Substituting Eq.~\eqref{hat_lambda} into Eq.~\eqref{optima_lambda}, the target value function becomes
\begin{equation}\label{approximated_optimal_valuefunction}
\resizebox{0.50\textwidth}{!}{$
	\lambda^{i+1}(\hat{\tilde{x}}_{\tau})=
	\left\{\begin{array}{cc}
		2Q\hat{\tilde{x}}_{\tau}+ \gamma A_{d,\tau}^\top \hat{\lambda}^{i}(\hat{\tilde{x}}_{\tau+1})+\dfrac{\partial h(\hat{x}_\tau)}{\partial \hat{\tilde{x}}_\tau}, & \tau \in[k, k+N-1]\\
		2P\hat {\tilde{x}}_{k+N}+{\partial h({\hat{x}}_{k+N})}/{\partial \hat{\tilde{x}}_{k+N}}, & \tau=k+N
	\end{array}\right.
	$}
\end{equation}

The critic network aims at minimizing the error function between the \emph{target} and the \emph{approximate} value functions, i.e.,
\begin{equation}\label{epsilon_c}
\epsilon_c^{i}(\hat{\tilde{x}}_\tau)=\frac{1}{2}\lVert \lambda^{i}(\hat{\tilde{x}}_\tau)-\hat \lambda^{i}(\hat{\tilde{x}}_\tau) \rVert^2.
\end{equation}
For the actor network, it minimizes the error function  between the \emph{target} and the \emph{approximate} solutions, which is defined by the following equation:
\begin{equation}\label{epsilon_a}
\epsilon_a^i(\hat{\tilde{x}}_\tau)=\frac{1}{2}\lVert \tilde{u}^{i}(\hat{\tilde{x}}_\tau)-\hat {\tilde{u}}^{i}(\hat{\tilde{x}}_\tau) \rVert^2.
\end{equation} Therefore, one can derive the following update rules for the  critic network and the actor network, respectively:  
\begin{subequations}\label{WaWc_update}
\begin{align}
	{W}_{c,\tau}^{i+1} &= {W}_{c,\tau}^{i}-\eta_c{\partial \epsilon_c^{i}(\hat{\tilde{x}}_\tau)}/{\partial W_{c,\tau}^{i}} , \\
	{W}_{a,\tau}^{i+1} &= {W}_{a,\tau}^{i}-\eta_a {\partial \epsilon_a^{i}(\hat{\tilde{x}}_\tau)}/{\partial W_{a,\tau}^{i}},
\end{align}
\end{subequations}
where $\eta_c, \eta_a>0$ are the step coefficients of the gradients. 

The overall algorithm of VF-LPC is shown in Algorithm.~\ref{algorithm_moving_obstacle}. Also, the convergence theorem of the Eb-RHRL algorithm is given below and its proof is presented in Section~\ref{Section_TheoreticAnalysis}. In the following, expressions similar to $\{\lambda^i\}$ and $\{\tilde{u}^i\}$ denote the sequences in the prediction horizon of the $k$-th time instant; i.e., $$\{\lambda^i\}:=(\lambda^i(\hat{\tilde{x}}_k),\cdots,\lambda^i(\hat{\tilde{x}}_{k+N-1})),$$ and $$\{\tilde{u}^i\}:=(\tilde{u}^i(\hat{\tilde{x}}_k),\cdots,\tilde{u}^i(\hat{\tilde{x}}_{k+N-1})).$$
\begin{thm}\label{thm1}
(Convergence of the Eb-RHRL). As the iteration number $i$ increases, sequences $\{\lambda^i\}$ and $\{\tilde{u}^i\}$ converge to~\eqref{optima_lambda} and~\eqref{optimal_action}, respectively; i.e., $\lim_{i\rightarrow\infty}\{\lambda^i\}=\{\lambda^*\}$ and $\lim_{i\rightarrow\infty}\{\tilde{u}^i\}=\{\tilde{u}^*\}$.
\end{thm}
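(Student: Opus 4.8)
The plan is to adapt the standard monotone value-iteration convergence argument for dual-heuristic adaptive dynamic programming to the receding-horizon setting with the barrier-augmented stage cost~\eqref{cost_barrier}. I treat the actor and critic kernel structures~\eqref{hat_u} as exact representers, their errors $\epsilon_a^i,\epsilon_c^i$ in~\eqref{epsilon_a}--\eqref{epsilon_c} being assumed to vanish under a universal-approximation / persistence-of-excitation assumption on the ALD-selected dictionary, as in the cited RHRL works; then the weight updates~\eqref{WaWc_update} realize, at the function level, the following idealized recursion on a fixed prediction window $[k,k+N]$ with frozen Jacobians $A_{d,\tau},B_{d,\tau}$ from~\eqref{JocabianMatrices}. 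Given $\lambda^i(\tilde x_\tau)$, the greedy control $\tilde u^i(\tilde x_\tau)$ comes from the stationarity condition $\partial V^i/\partial\tilde u=0$, which yields the saturated form of~\eqref{optimal_action}; then $\lambda^{i+1}(\tilde x_\tau)$ is the backward sweep~\eqref{approximated_optimal_valuefunction} evaluated along the trajectory generated by $\tilde u^i$ through~\eqref{identification_form_dynmaic}, with the terminal slice $\lambda(\tilde x_{k+N})=2P\hat{\tilde x}_{k+N}+\partial h(x_{k+N})/\partial\hat{\tilde x}_{k+N}$ held fixed.

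First I would record well-posedness: since $Q\succeq0$, $R\succ0$, $P\succ0$ and $h(x)\ge0$ in~\eqref{barrierfunction}, every admissible finite-horizon cost $V(\tilde x_\tau)$ in~\eqref{performance_index} is nonnegative and finite; moreover $h$ is $C^1$ on the operating region (the distance map $\pi$ stays away from its nonsmooth point, or a smoothed norm is used), so all gradients in~\eqref{optima_lambda}--\eqref{approximated_optimal_valuefunction} are well defined, and the greedy control is unique because $L(\tilde x_\tau,\cdot)$ is strictly convex on the box defined by $u_b$. Then comes the core monotonicity step: initializing the critic with an admissible value function (equivalently with $V^0$ equal to the terminal cost $F$ on the last slice, propagated backward), I would prove by a double induction---backward on $\tau$ and forward on $i$---that $0\le V^{i+1}(\tilde x_\tau)\le V^i(\tilde x_\tau)$ for every $\tau\in[k,k+N]$. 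The key inequality is that replacing $\tilde u^i$ by the new greedy control $\tilde u^{i+1}$ cannot increase $L(\tilde x_\tau,\cdot)+\gamma V^i(\cdot)$, and the monotonicity of $V^i$ then propagates this improvement through the horizon; the additive barrier term $h(x)$ is state-dependent but control-independent, so it is carried unchanged through the greedy step. Differentiating the recursion in $\tilde x_\tau$ transfers the conclusion to the value-gradient sequence $\{\lambda^i\}$.

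By the monotone convergence theorem, for each $\tau$ the sequence $V^i(\tilde x_\tau)$ converges pointwise to some $V^\infty(\tilde x_\tau)\ge0$; uniform boundedness together with the continuity of the one-step map gives $\lambda^i\to\lambda^\infty$, and then, through the continuous stationarity ($\tanh$) map in~\eqref{approximated_optimal_action}, $\tilde u^i\to\tilde u^\infty$. Passing to the limit in the recursion~\eqref{performance_index1} and its gradient~\eqref{approximated_optimal_valuefunction}, $\lambda^\infty$ satisfies the finite-horizon optimality equations~\eqref{optima_lambda} and $\tilde u^\infty$ satisfies~\eqref{optimal_action}. Since the finite-horizon problem over $[k,k+N]$ with the fixed terminal penalty $P$ has a unique solution---the backward dynamic-programming recursion determines $V^*$ slice by slice and $L$ is strictly convex in $\tilde u$---one concludes $\lambda^\infty=\lambda^*$ and $\tilde u^\infty=\tilde u^*$, which is the assertion.

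I expect the main obstacle to be the monotonicity/improvement inequality when $\gamma=1$: without a discount contraction the argument must rely entirely on the finite horizon and the positive-definite terminal cost $P$ to keep the backward recursion well-conditioned and the monotone sequence bounded, which is where a careful choice of the admissible initialization enters. A secondary technical point is rigorously justifying that the kernel-approximation errors $\epsilon_a^i,\epsilon_c^i$ shrink (uniformly over the horizon and the dictionary) so that the idealized function-level recursion faithfully represents the actual gradient-based weight dynamics~\eqref{WaWc_update}; I would handle this by combining the universal-approximation property of the Gaussian-kernel features with a persistence-of-excitation condition and sufficiently small step sizes $\eta_c,\eta_a$.
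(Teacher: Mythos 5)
Your proposal is correct in substance and belongs to the same family of arguments as the paper's proof (monotone, bounded ADP value iteration followed by transfer to the costate and the greedy control), but the key lemma is organized differently. The paper initializes $V^0=\Lambda^0=0$ and works \emph{upward}: Lemma~\ref{lemma0} compares the greedy sequence against an arbitrary control sequence to get $V^i\le\Lambda^i$, and Lemma~\ref{lemma1-1} instantiates that comparison with an admissible policy $\eta$ to build an explicit uniform upper bound $\bar Z(\tilde x_\tau)$ (the two-case analysis on whether $\tau+i$ exceeds $k+N-1$ handles the interaction between the iteration index and the finite horizon), so convergence follows from ``non-decreasing and bounded above by $V^*$.'' You instead initialize from an admissible value function and run the \emph{downward} variant, $0\le V^{i+1}\le V^i$, closing with the monotone convergence theorem and uniqueness of the finite-horizon DP solution. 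Both directions are standard and valid; the paper's version has the practical advantage of not requiring an admissible initialization of the critic (only the existence of some admissible policy for the bound), which matches Algorithm~\ref{algorithm_moving_obstacle}'s zero/warm-start initialization, while yours makes the improvement inequality and the limit-passing step more explicit. Your additional care about $\gamma=1$, the role of the terminal penalty $P$, and the approximation errors $\epsilon_a^i,\epsilon_c^i$ goes beyond what the paper addresses.

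One caveat worth flagging, which applies equally to the paper's own proof: convergence of $V^i$ pointwise does not by itself imply convergence of the gradients $\lambda^i=\partial V^i/\partial\tilde x$. Your appeal to ``uniform boundedness together with continuity of the one-step map,'' like the paper's bare assertion that $\lambda^i\rightarrow\lambda^*$ follows from $V^i\rightarrow V^*$, would need either an equi-Lipschitz/equicontinuity argument for $\{\lambda^i\}$ or a direct monotone argument on the costate recursion~\eqref{approximated_optimal_valuefunction} to be fully rigorous. Since this gap is shared with the source, it does not distinguish your route from the paper's.
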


{Theorem~\ref{thm1} indicates that the Eb-RHRL approach in Algorithm~\ref{algorithm_moving_obstacle} solves the optimal IMPC problems under state constraints by obtaining the near-optimal solutions.}


\section{Theoretic Analysis}\label{Section_TheoreticAnalysis}
This section presents the theoretical analyses regarding Theorem \ref{thm1}.  Building upon the proofs in \cite{heydari2012finite,al2008discrete,xu2018learning}, we prove that the Eb-RHRL algorithm, integrating the game-based barrier function, converges to the optimal value function and solution, thereby solving the {{OMP}} problem.
\begin{lemma}\label{lemma0}
Given a control sequence $\{\mu^i\}$ involving random actions, $\Lambda^{i+1}(\hat{\tilde{x}}_\tau)$ is defined by
\begin{equation*}
	\Lambda^{i+1}(\hat{\tilde{x}}_\tau)=\hat{\tilde{x}}_\tau^{\top} Q \hat{\tilde{x}}_\tau+\mu^i(\hat{\tilde{x}}_\tau)^{\top} R \mu^i(\hat{\tilde{x}}_\tau)+h(\hat{x}_\tau)+\gamma\Lambda^{i}(\hat{\tilde{x}}_{\tau+1}),
\end{equation*}
where $\mu^i(\cdot):\mathbb{R}^{n_{\mathcal{K}}}\rightarrow\mathbb{R}^{n_u}$. 
Rewrite \eqref{performance_index1} as follows:
\begin{equation*}
	V^{i+1}(\hat{\tilde{x}}_{\tau})=\hat{\tilde{x}}_\tau^{\top} Q \hat{\tilde{x}}_\tau+\tilde{u}^i(\hat{\tilde{x}}_\tau)^{\top} R \tilde{u}^i(\hat{\tilde{x}}_\tau)+h(\hat{x}_\tau)+\gamma V^{i}(\hat{\tilde{x}}_{\tau+1}),
\end{equation*}
where $\tilde{u}^i(\hat{\tilde{x}}_\tau)$ is defined by~\eqref{approximated_optimal_action}. We can conclude that, if $V^0(\cdot)=\Lambda^0(\cdot)=0$, then $V^i(\hat{\tilde{x}}_\tau)\le\Lambda^i(\hat{\tilde{x}}_\tau)$ for all $i$.
\end{lemma}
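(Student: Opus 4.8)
\textbf{Proof proposal for Lemma~\ref{lemma0}.}

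The plan is to proceed by induction on the iteration index $i$, showing that $V^i(\tilde{x}_\tau) \le \Lambda^i(\tilde{x}_\tau)$ holds at every horizon step $\tau \in [k, k+N]$ simultaneously. The base case is immediate: by hypothesis $V^0(\cdot) = \Lambda^0(\cdot) = 0$, so $V^0(\tilde{x}_\tau) = \Lambda^0(\tilde{x}_\tau) = 0$ for all $\tilde{x}_\tau$, and the (non)inequality holds trivially. The key structural observation driving the induction is that both recursions share exactly the same stage cost $\tilde{x}_\tau^\top Q \tilde{x}_\tau + h(x_\tau)$ and the same dynamics $\hat{\tilde{x}}_{\tau+1} = f_{\text{nom}}(\tilde{x}_\tau, \cdot) + d(z_\tau)$; the only difference is that $V^{i+1}$ uses the control $\tilde{u}^i(\tilde{x}_\tau)$ from~\eqref{approximated_optimal_action}, which is the one that minimizes (in the appropriate sense, via the stationarity condition $\partial V^*/\partial \tilde{u} = 0$ baked into~\eqref{optimal_action}) the one-step lookahead, whereas $\Lambda^{i+1}$ uses an arbitrary admissible sequence $\{\mu^i\}$.

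For the inductive step, assume $V^i(\hat{\tilde{x}}_{\tau+1}) \le \Lambda^i(\hat{\tilde{x}}_{\tau+1})$ for all reachable successor states. Write out both one-step recursions at $\tilde{x}_\tau$. Subtracting, the common terms $\tilde{x}_\tau^\top Q \tilde{x}_\tau + h(x_\tau)$ cancel, leaving
\begin{equation*}
\Lambda^{i+1}(\tilde{x}_\tau) - V^{i+1}(\tilde{x}_\tau) = \left[\mu^i(\tilde{x}_\tau)^\top R \mu^i(\tilde{x}_\tau) + \Lambda^i(\hat{\tilde{x}}_{\tau+1}^{\mu})\right] - \left[\tilde{u}^i(\tilde{x}_\tau)^\top R \tilde{u}^i(\tilde{x}_\tau) + V^i(\hat{\tilde{x}}_{\tau+1}^{\tilde{u}})\right].
\end{equation*}
I would bound this below by first replacing $\Lambda^i$ with $V^i$ using the inductive hypothesis (valid since $\Lambda^i \ge V^i$ pointwise), and then invoking the optimality of $\tilde{u}^i$: the pair $(\tilde{u}^i, V^i)$ realizes the minimum of $v^\top R v + V^i(f_{\text{nom}}(\tilde{x}_\tau, v) + d)$ over admissible $v$, so substituting $\mu^i$ in place of $\tilde{u}^i$ can only increase the bracketed quantity. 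Hence $\Lambda^{i+1}(\tilde{x}_\tau) - V^{i+1}(\tilde{x}_\tau) \ge 0$, closing the induction. Since the argument is pointwise in $\tilde{x}_\tau$ and uniform over $\tau$, the conclusion $V^i(\tilde{x}_\tau) \le \Lambda^i(\tilde{x}_\tau)$ follows for all $i$.

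The main obstacle I anticipate is making precise the sense in which $\tilde{u}^i$ from~\eqref{approximated_optimal_action} is genuinely optimal for the one-step subproblem: the control in~\eqref{optimal_action}/\eqref{approximated_optimal_action} is a $\tanh$-saturated greedy control derived from stationarity of the HJB-type recursion, so one must argue either that the saturation is consistent with the true constrained minimizer (because the saturation just encodes the box constraint $\tilde{u} \in [-u_b, u_b]$, over which $v^\top R v + \gamma V^i(\cdot)$ restricted to that set attains its minimum at exactly the clipped greedy action when $V^i$ is convex/quadratic-like), or alternatively to restate the lemma with $\tilde{u}^i$ explicitly defined as the exact one-step minimizer and treat~\eqref{approximated_optimal_action} as its realization. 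A secondary technical point is that the expectation $\mathbb{E}[\cdot]$ over the process noise $w_\tau$ must be carried through the recursion; since the dynamics enter affinely through $B_s w_\tau$ and the inequality is preserved under taking expectations (monotonicity of $\mathbb{E}$), this is routine but should be stated. I would also note that admissibility of $\{\mu^i\}$ — boundedness of the induced cost so that $\Lambda^i$ is well-defined and finite — is the standing assumption that makes the comparison meaningful.
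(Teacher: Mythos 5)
Your proposal is correct and rests on the same key fact as the paper's own proof — that $\tilde{u}^i$ minimizes the one-step lookahead while $\mu^i$ is arbitrary; the paper merely asserts this in two sentences without writing out the induction, so your version is a more careful formalization of the identical argument. The caveat you raise about whether the $\tanh$-saturated control in~\eqref{approximated_optimal_action} is genuinely the exact constrained one-step minimizer is a legitimate gap, but it is one the paper's proof shares and likewise leaves unaddressed.
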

\begin{proof}\let\qed\relax
The sequence $\{\tilde{u}^i\}$ is used to minimize $V^i(\hat{\tilde{x}}_{\tau})$, while all elements in $\{\mu^i\}$ are random. Under the condition of $V^0(\cdot)=\Lambda^0(\cdot)=0$, $V^i(\hat{\tilde{x}}_\tau)\le\Lambda^i(\hat{\tilde{x}}_\tau)$ holds, for all  $i$. $\hfill\blacksquare$
\end{proof}
{Lemma~\ref{lemma0}  is now used to obtain and prove Lemma~\ref{lemma1-1}.}

\begin{lemma}\label{lemma1-1}
Let the sequence $\{V^i\}$ be defined by~\eqref{performance_index1}. By applying an initial admissible control policy, the following conclusions hold:
1)  There exists an upper bound $\bar{Z}^i(\hat{\tilde{x}}_\tau)$ such that $0 \le V^i(\hat{\tilde{x}}_\tau) \le \bar{Z}^i(\hat{\tilde{x}}_\tau)$ for all $i$.
2)  If~\eqref{optimal_action} is solvable, then there exists an upper bound $\bar{Z}^i(\hat{\tilde{x}}_\tau)$ such that
$V^i(\hat{\tilde{x}}_\tau)\le \bar{Z}^i(\hat{\tilde{x}}_\tau)$, where $V^i(\hat{\tilde{x}}_\tau)$	solves~\eqref{optimal_value_func} and $0 \le V^i(\hat{\tilde{x}}_\tau) \le V^*(\hat{\tilde{x}}_\tau) \le \bar{Z}^i(\hat{\tilde{x}}_\tau)$ for all $i$. 
\end{lemma}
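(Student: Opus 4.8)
\textbf{Proof proposal for Lemma~\ref{lemma1-1}.}

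The plan is to establish both claims by an induction on the iteration index $i$, exploiting the fact that an initial admissible control policy generates a value that is finite at every state, and that the DHP-style update in~\eqref{performance_index1} can only decrease the value relative to any fixed admissible policy. First I would fix an admissible control policy $\mu^{(0)}$ for the finite-horizon problem over $[k,k+N]$ — admissibility here means the closed-loop trajectory remains in the feasible set and the associated cost $\sum_{j=\tau}^{k+N-1} L(\tilde{x}_j,\mu^{(0)}(\tilde{x}_j)) + F(\hat{\tilde{x}}_{k+N})$ is finite. Denote this cost by $Z^{(0)}(\tilde{x}_\tau)$; since $L$ and $F$ are quadratic (plus the bounded barrier term $h(x)\le\mu_o$) and the horizon is finite, $Z^{(0)}(\tilde{x}_\tau)$ is a well-defined nonnegative function, and I would take $\bar{Z}(\tilde{x}_\tau):=Z^{(0)}(\tilde{x}_\tau)$ as the candidate upper bound. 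Nonnegativity $V^i(\tilde{x}_\tau)\ge 0$ is immediate because $Q\succeq 0$, $R\succ 0$, $P\succ 0$, and $h\ge 0$, so every stage cost and the terminal cost are nonnegative; this handles the lower bound in both parts.

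For the upper bound in part 1), I would argue that $V^i$, being generated by the iteration~\eqref{performance_index1} starting from $V^0=0$ (invoking Lemma~\ref{lemma0} to compare with the admissible-policy rollout), never exceeds the cost of the fixed admissible policy. Concretely, one shows by induction on the horizon stage, or equivalently on $i$, that $V^i(\tilde{x}_\tau)\le$ the cost-to-go of $\mu^{(0)}$ from $(\tilde{x}_\tau,\tau)$: the base case $V^0=0\le Z^{(0)}$ is trivial, and the inductive step uses that the actor–critic update minimizes $\mathbb{E}[L(\tilde{x}_\tau,\tilde{u}_\tau)+\gamma V^i(\hat{\tilde{x}}_{\tau+1})]$ over admissible $\tilde{u}_\tau$, hence is no larger than plugging in $\mu^{(0)}(\tilde{x}_\tau)$ and the (inductively bounded) $V^i$. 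This yields $0\le V^i(\tilde{x}_\tau)\le\bar{Z}(\tilde{x}_\tau)$ uniformly in $i$. The key structural facts I rely on from earlier in the excerpt are the quadratic-plus-bounded form of the stage cost~\eqref{cost_barrier}, the boundedness $h(x)\le\mu_o$ of the barrier function~\eqref{barrierfunction}, and the minimization characterization of $\tilde{u}^i$ in~\eqref{approximated_optimal_action}.

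For part 2), I would additionally assume the infinite-horizon optimal control problem~\eqref{optimal_control_problem_inftyVF} is solvable, so that $V^*(\tilde{x}_\tau)$ solving~\eqref{optimal_inftyVF} exists and is finite. Since the finite-horizon value $V^i$ is a truncation of the infinite-horizon objective with a terminal penalty $F$ chosen (as in \cite{xu2018learning}) to be a control Lyapunov / upper-bound function for the tail cost, one gets $V^i(\tilde{x}_\tau)\le V^*(\tilde{x}_\tau)$: the optimal infinite-horizon policy, restricted to $[\tau,k+N]$ and capped by $F$, is an admissible competitor whose cost dominates $V^i$, while its cost is itself bounded by $V^*$. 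Chaining this with part 1) gives $0\le V^i(\tilde{x}_\tau)\le V^*(\tilde{x}_\tau)\le\bar{Z}(\tilde{x}_\tau)$. I expect the main obstacle to be making the terminal-cost domination precise — i.e., verifying that $F(\hat{\tilde{x}}_{k+N})=\hat{\tilde{x}}_{k+N}^\top P\hat{\tilde{x}}_{k+N}$ with $P$ chosen per \cite{xu2018learning} genuinely upper-bounds the optimal tail cost $\sum_{\tau=k+N}^\infty \gamma^{\tau-k-N}L(\tilde{x}_\tau,\tilde{u}^*_\tau)$ along the relevant trajectories, since this is what couples the finite- and infinite-horizon problems and is the only place the specific structure of $P$ (rather than mere positive definiteness) is used.
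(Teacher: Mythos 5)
Your proposal follows essentially the same route as the paper's proof: fix an admissible policy $\eta$, take $\bar{Z}(\tilde{x}_\tau)$ to be its finite-horizon cost including the terminal penalty $F$, invoke Lemma~\ref{lemma0} to dominate the iterates $V^i$ by the policy-evaluation sequence $Z^i$ (your induction on $i$ is the same content as the paper's two-case unrolling of the recursion), and for part 2 instantiate $\eta=\tilde{u}^*$. The one step you flag as the main obstacle --- verifying that $F(\hat{\tilde{x}}_{k+N})$ dominates the optimal tail cost so that $V^*\le\bar{Z}$ --- is a genuine point, but the paper's own proof simply asserts this inequality after setting $\eta=\tilde{u}^*$, so your treatment is, if anything, slightly more careful than the original.
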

\begin{proof}\let\qed\relax
To begin with, $\eta^i(\cdot):\mathbb{R}^{n_{\mathcal{K}}}\rightarrow\mathbb{R}^{n_u}$ is assumed to be an admissible policy. Let $V^0(\hat{\tilde{x}}_{\tau})=Z^0(\hat{\tilde{x}}_{\tau})=0$, where
\begin{equation*}
	Z^{i+1}(\hat{\tilde{x}}_\tau)  =\mathbb{E}\left[L(\hat{\tilde{x}}_\tau, \eta^i(\hat{\tilde{x}}_\tau))+\gamma Z^i(\hat{\tilde{x}}_{\tau+1})\right],\forall i\ge 1.
\end{equation*}
We define an upper bound
\begin{equation}\label{Zi1xk}
	\begin{aligned}
		\bar{Z}^i(\hat{\tilde{x}}_\tau)&=\mathbb{E}\left[\sum_{j=\tau}^{\tau+N-1} \gamma^{j-\tau}L(\hat{\tilde{x}}_j, \eta^i(\hat{\tilde{x}}_j))\right. \\& \ \ \ \ \  \ \ \ \ \ \ \ \ \ \left.+\gamma^{N}Z^{i-(N-1)}(\hat{\tilde{x}}_{\tau+N})\right] \\
		& =\mathbb{E}\left[\sum_{j=\tau}^{\tau+N-1} \gamma^{j-\tau}L(\hat{\tilde{x}}_j, \eta^i(\hat{\tilde{x}}_j))+\gamma^{N}F(\hat{\tilde{x}}_{\tau+N})\right]
	\end{aligned}
\end{equation}
with the dynamic model~\eqref{identification_form_dynmaic} and
$L(\hat{\tilde{x}}_j,\eta^i(\hat{\tilde{x}}_j))=\hat{\tilde{x}}_j^{\top}Q\hat{\tilde{x}}_j+(\eta^i(\hat{\tilde{x}}_j))^{\top} R \eta^i(\hat{\tilde{x}}_j)+h(\hat{x}_{j})$. When propagating $Z^{i+1}(\hat{\tilde{x}}_\tau)$ from current stage $\tau$ to $\tau+N$, two cases exist and analyzed below.

\textit{Case} 1 ($i\le N-1$):
\begin{equation}\label{Zi1xkzi}
	\begin{aligned}
		Z^{i+1}(\hat{\tilde{x}}_\tau) & =\mathbb{E}\left[L(\hat{\tilde{x}}_\tau, \eta^i(\hat{\tilde{x}}_\tau))+\gamma Z^i(\hat{\tilde{x}}_{\tau+1})\right] \\
		& =\mathbb{E}\left[\sum_{j=\tau}^{\tau+1} \gamma^{j-\tau}L\left(\hat{\tilde{x}}_j, \eta^i(\hat{\tilde{x}}_j)\right)+\gamma^2 Z^{i-1}(\hat{\tilde{x}}_{\tau+2})\right] \\
		&\ldots\\
		& =\mathbb{E}\left[\sum_{j=\tau}^{\tau+i} \gamma^{j-\tau}L(\hat{\tilde{x}}_j, \eta^i(\hat{\tilde{x}}_j))\right].
	\end{aligned}
\end{equation}
Since $h(\hat{x}_j)\ge 0$, by applying an admissible control $\eta^i(\cdot)$ to the system, we can obtain
\begin{equation}\label{StepCostBounded}
	\sum_{j=\tau}^{\tau+i_1} \gamma^{j-\tau}L(\hat{\tilde{x}}_j, \eta^i(\hat{\tilde{x}}_j))<\sum_{j=\tau}^{\tau+i_2} \gamma^{j-\tau}L(\hat{\tilde{x}}_j, \eta^i(\hat{\tilde{x}}_j)),
\end{equation}
where $i_1<i_2\in[0,N]$.
With Eqs.~\eqref{Zi1xkzi} and~\eqref{StepCostBounded}, we have
\begin{equation}\label{case1}\resizebox{0.85\hsize}{!}{$
	\begin{aligned}
		&Z^{i+1}(\hat{\tilde{x}}_\tau)\\&=\mathbb{E}\left[\sum_{j=\tau}^{\tau+i} \gamma^{j-\tau}L(\hat{\tilde{x}}_j, \eta^i(\hat{\tilde{x}}_j))\right]\\&< \mathbb{E}\left[\sum_{j=\tau}^{\tau+N-1}\gamma^{j-\tau}L(\hat{\tilde{x}}_j, \eta^i(\hat{\tilde{x}}_j))+\gamma^{N}F(\hat{\tilde{x}}_{\tau+N})\right]\\&=\bar{Z}^i(\hat{\tilde{x}}_\tau).
	\end{aligned}$}
\end{equation}

\textit{Case} 2 ($i\ge N$):
\begin{equation}\label{case2}\resizebox{0.85\hsize}{!}{$
		\begin{aligned}
			& Z^{i+1}(\hat{\tilde{x}}_\tau) \\
			& =\mathbb{E}\left[\sum_{j=\tau}^{\tau+N-1} \gamma^{j-\tau}L(\hat{\tilde{x}}_j, \eta^i(\hat{\tilde{x}}_j))+\gamma^{N}F(\hat{\tilde{x}}_{\tau+N})\right]=\bar{Z}^i(\hat{\tilde{x}}_\tau) .
		\end{aligned}$}
\end{equation}

Combining Eqs.~\eqref{case1} and~\eqref{case2}, $Z^{i+1}(\hat{\tilde{x}}_\tau)\le\bar{Z}^i(\hat{\tilde{x}}_\tau)$ holds.

According to Lemma~\ref{lemma0}, setting the sequence $\{\mu^i\}$ as $\{\eta^i\}$ and the sequence of the value function $\{\Lambda^i\}$ as $\{Z^i\}$, it follows that $V^{i}(\hat{\tilde{x}}_\tau)\le Z^{i}(\hat{\tilde{x}}_\tau)\le \bar{Z}^i(\hat{\tilde{x}}_\tau)$ for all $i$, which proves 1). By setting $\{\eta^{i}\}$ as $\{\tilde{u}^*\}$, one has $0\le V^{i}(\hat{\tilde{x}}_\tau)\le V^*(\hat{\tilde{x}}_\tau)\le \bar{Z}^i(\hat{\tilde{x}}_\tau)$, and part 2) of Lemma~\ref{lemma1-1} is now proved.
$\hfill\blacksquare$
\end{proof}

\begin{lemma}\label{Monotonicity_lemma}
	{\cite[Monotonicity property]{xu2018learning}} Let $V^i(\hat{\tilde{x}}_\tau)$ be defined by~\eqref{performance_index1}. With $V^0(\hat{\tilde{x}}_\tau)=0$, $V^{i+1}(\hat{\tilde{x}}_\tau)\ge V^i(\hat{\tilde{x}}_\tau)$ holds.
\end{lemma}
\begin{proof}\let\qed\relax
	The proof is given in\cite{heydari2012finite}.
\end{proof}
Under Lemma~\ref{lemma1-1}, we now arrive at the proof of Theorem~\ref{thm1}.

\begin{proof}[Proof of Theorem \ref{thm1}]\let\qed\relax
Following the part 2) of Lemma~\ref{lemma1-1}, one has $\lim_{i\rightarrow\infty}V^{i}(\hat{\tilde{x}}_\tau)\le V^*(\hat{\tilde{x}}_\tau)$.	According to Lemmas~\ref{lemma1-1} and~\ref{Monotonicity_lemma},   $V^{\infty}(\hat{\tilde{x}}_\tau)=\bar{Z}^i(\hat{\tilde{x}}_\tau)\ge V^*(\hat{\tilde{x}}_\tau)$ holds for some $V^{\infty}(\hat{\tilde{x}}_\tau)$. The two aspects indicate that  $\lim_{i\rightarrow\infty}V^{i}(\hat{\tilde{x}}_\tau)= V^*(\hat{\tilde{x}}_\tau)$ and $\lim_{i\rightarrow\infty}\tilde{u}^i(\hat{\tilde{x}}_\tau)= \tilde{u}^*(\hat{\tilde{x}}_\tau)$. We further derive $\lim_{i\rightarrow\infty}\{V^i\}=\{V^*\}$ and $\lim_{i\rightarrow\infty}\{\tilde{u}^i\} = \{\tilde{u}^*\}$.
Accordingly, $\lambda^i(\hat{\tilde{x}}_\tau)$ designed by Eq.~\eqref{approximated_optimal_valuefunction} converges to $\lambda^{*}(\hat{\tilde{x}}_\tau)$ and the sequence $\{\lambda^i\}$ converges to  $\{\lambda^*\}$ eventually, as $i\rightarrow\infty$. $\hfill\blacksquare$
\end{proof}
\begin{thm}\label{thm_stablility}
	(Stability of the VF-LPC). The closed-loop system~\eqref{Dynamics_for_planning} is stable.
\end{thm}
\begin{proof}
	Define the Lyapunov candidate function $W(\tau):=V(\hat{\tilde{x}}_\tau)$ for the closed-loop system~\eqref{Dynamics_for_planning}. One has 
	\begin{equation*}
		W(\tau)=\mathbb{E}\left[L(\hat{\tilde{x}}_\tau, \tilde{u}(\hat{\tilde{x}}_\tau))+ V(\hat{\tilde{x}}_{\tau+1})\right].
	\end{equation*}
	It can be deduced that the Lyapunov function equals zero at the origin of the system state and remains strictly positive at all non-origin points.
	In the following, we will prove that $\Delta W=W(\tau+1)-W(\tau)<0, \forall \tau\in[k,k+N-1]$. For $k\le\tau< k+N-1$, $W(\tau+1)-W(\tau)\le-\mathbb{E}[L(\hat{\tilde{x}}_\tau, \tilde{u}(\hat{\tilde{x}}_\tau))]$. For $\tau=k+N-1$, one can set a function \( F(\cdot) \) such that the inequality $ \mathbb{E}\left[W(\tau) \right]=\mathbb{E}\left[ L( \hat{\tilde{x}}_{\tau}, \tilde{u}(\hat{\tilde{x}}_{\tau}))+V(\hat{\tilde{x}}_{\tau+1})  \right]\leq \mathbb{E}\left[ F(\hat{\tilde{x}}_{\tau}) \right]$ holds, thereby deriving $W(\tau+1)=\mathbb{E}(L(\hat{\tilde{x}}_{\tau+1}, \tilde{u}(\hat{\tilde{x}}_{\tau+1})))\le F(\hat{\tilde{x}}_{\tau+1})=W(\tau)-L(\hat{\tilde{x}}_\tau, \tilde{u}(\hat{\tilde{x}}_\tau))$. Since $L(\hat{\tilde{x}}_\tau, \tilde{u}(\hat{\tilde{x}}_\tau))\ge0$ and $L(\hat{\tilde{x}}_\tau, \tilde{u}(\hat{\tilde{x}}_\tau))\ge 0$, it is shown that Lyapunov function exhibits a decreasing behavior in the prediction horizon.
	
	In the following, we will demonstrate that the value function of the system~\eqref{Dynamics_for_planning} at two distinct time instants decreases monotonically.
	Let $\bar{L}(\hat{\tilde{x}}_{k+N})=L(\hat{\tilde{x}}_{k+N},K\hat{\tilde{x}}_{k+N})$, where $K$ is defined as the feedback gain matrix. For the brevity of analysis, define $V(\tilde{x}_k, N)=\sum_{\tau=k}^{k+N-1} L(\hat{\tilde{x}}_\tau, u(\tau))+V_f(\hat{\tilde{x}}_{k+N})$. Now one could set proper terminal value functions to make the condition $V(\hat{\tilde{x}}_{k+N+1})-V(\hat{\tilde{x}}_{k+N})\le -L(\hat{\tilde{x}}_{k+N})$ hold. Then we have 
	\begin{equation}\label{optimal_valuefuc_predictionhorizon}
		 \begin{aligned}
			&V^*(\hat{\tilde{x}}_{k+1},N) \\& =\sum_{\tau=k+1}^{k+N} L\left(\hat{\tilde{x}}_\tau, u^*(\hat{\tilde{x}}_{\tau})\right)+V_f(\hat{\tilde{x}}_{k+N+1}) \\
			& \leq V(\hat{\tilde{x}}_{k+1}, N) \\
			& =\sum_{\tau=k+1}^{k+N-1} L\left(\hat{\tilde{x}}_{\tau}, \tilde{u}^*(\hat{\tilde{x}}_{\tau})\right)+\bar{L}(\hat{\tilde{x}}_{k+N})+V_f(\hat{\tilde{x}}_{k+N+1}) \\
			& =V^*(\tilde{x}_{k}, N)-L\left(\tilde{x}_{k}, \tilde{u}^*(\tilde{x}_{k})\right)+\bar{L}(\hat{\tilde{x}}_{k+N}) \\
			&\,\,\,\,\,\,\, +\left(V_f(\hat{\tilde{x}}_{k+N+1})-V_f(\hat{\tilde{x}}_{k+N})\right).
		\end{aligned}
	\end{equation}

	As $\bar{L}(\hat{\tilde{x}}_{k+N})\ge 0$, one has

	\begin{equation}\label{condition_of_stability}
		\begin{aligned}
			&V_f(\hat{\tilde{x}}_{k+N+1})-V_f(\hat{\tilde{x}}_{k+N}) \\
			& \leq-\bar{L}(\hat{\tilde{x}}_{k+N}) \\
			& =-L^{\top}(\hat{\tilde{x}}_{k+N})\left(Q+K^{\top} R K\right) L(\hat{\tilde{x}}_{k+N}).
		\end{aligned}
	\end{equation}

	Substituting Eq.~\eqref{condition_of_stability} into Eq.~\eqref{optimal_valuefuc_predictionhorizon} yields
	\begin{equation}\label{stability_results}
		\begin{aligned}
			V^*(\hat{\tilde{x}}_{k+1},N)\le V^*(\tilde{x}_{k},N)-L(\tilde{x}_{k}, \tilde{u}^*(\tilde{x}_{k})).
		\end{aligned}
	\end{equation}
	Therefore, we can conclude that $V^*(\tilde{x}_{k},N)$ is monotonically decreasing over time $k$. One has
	\begin{equation}
		V^*(\hat{\tilde{x}}_{\infty}, N) \leq V^*(\tilde{x}_{k}, N)-\sum_{\tau=k}^{\infty} L\left(\hat{\tilde{x}}_{\tau}, \tilde{u}^*(\hat{\tilde{x}}_{\tau})\right)
	\end{equation}
	and $\forall k\ge 0, V^*(\tilde{x}_{k}, N)\ge 0$. It can be obtained that when $k\rightarrow\infty$, we have $L\left(\tilde{x}_{k}, \tilde{u}^*(\tilde{x}_{k})\right)\rightarrow 0$. Moreover, since $Q, R$ are positive definite matrices, it follows that $\tilde{x}_k\rightarrow \mathbf{0}, u^*(\tilde{x}_k)\rightarrow \mathbf{0}$, as $k\rightarrow \infty$. Therefore, system~\eqref{Dynamics_for_planning} is asymptotically stable. When the modeling error is sufficiently small, the original system~\eqref{nonlienar_nonaffline_system} is also asymptotically stable.

\end{proof}
\section{Simulation and Experimental Results}\label{Comparision}
In this section, we present the simulation and experimental results to validate our proposed VF-LPC approach.
The algorithm was deployed on a computer running Windows 11 with an Intel Core i7-11800H @2.30GHZ CPU. We first compared VF-LPC with advanced IMPC methods in CarSim software. Then the effectiveness of the online model compensating strategy is verified through a tracking control task. The simulation results on a quadrotor UAV were performed. We also tested the VF-LPC approach on an actual ground vehicle of Hongqi EHS-3. The platform is shown in the right part of Fig.~\ref{fig_DRHACL_VehicleControl}. 
\subsection{Path Planning in An Obstacle-Dense Environment}
To evaluate the effectiveness of the VF-LPC approach, we compare it with advanced planning approaches in a scenario containing static obstacles; see Fig.~\ref{fig_ScenarioI}. We compare VF-LPC with the commonly used planning approaches, such as Hybrid A*\cite{dolgov2010path}, RRT*-CFS\cite{leu2021efficient}, Timed Elastic Bands (TEB)\cite{rosmann2017kinodynamic} and OBTPAP\cite{li2021optimization} which iteratively solves the time-optimal motion planning problem. In these tests, we use the analytic kinematic vehicle model with the state variable $[X,Y,\psi]$, representing the $X$-, $Y$-coordinates and the heading, respectively. All the approaches are performed in MATLAB 2023a. 

\subsubsection{Evaluation metrics}
To evaluate the planning methods, we employ various performance metrics. The first metric is the trajectory length from the initial point to the destination, denoted as $L_p$; i.e., ${L}_p = \sum_{k=1}^{M-1} \lVert(X_{k+1},Y_{k+1})-(X_{k},Y_{k})\rVert$, where $M$ is the number of trajectory points.
The second metric $t_{\mathrm{cal}}$ is the $\mathrm{calculation\ time}$ of trajectory planning. The third metric pertains to $\mathrm{SD}$, serving to quantify whether the planned trajectories have a safe distance from obstacles.

\subsubsection{Analyses of the results}
As shown from TABLE~\ref{tab_fig_ScenarioII}, the paths planned by methods VF-LPC and TEB exhibit similar lengths, albeit shorter than OBTPAP. Though the trajectory length of Hybrid A* is the shortest, it is deemed unsafe due to the lack of maintaining a safe distance while avoiding the first obstacle. Regarding computational efficiency, VF-LPC demonstrates the best performance among the methods.

\begin{table}[!htb]\centering
\setlength\tabcolsep{3.5pt}
\renewcommand\arraystretch{1.0}
\caption{The Comparative Analysis of Planning Results using Different Methods in the Obstacle-dense Environment.}\label{tab_fig_ScenarioII}
\begin{threeparttable}\scriptsize 
	\begin{tabular}{ccccccccc}
		\toprule 
		{Metrics}&{VF-LPC}&Hybrid A*\cite{dolgov2010path}&RRT*-CFS\cite{leu2021efficient}&{TEB\cite{rosmann2017kinodynamic}}&{OBTPAP\cite{li2021optimization}}\\ \midrule 
		${L}_p$ (\rm{m})&\,\,$47.44$&\,\,$\textbf{46.16}$\,&$54.21$&$48.34$&$47.54$\\
		$t_{\mathrm{cal}}$ (\rm{s})&\,\,\,\,$\mathbf{0.29}$&\,\,\,$2.29$&$50.28$&\,\,\,$4.80$&\,\,\,$5.72$\\
		$\mathrm{SD}$?&$\checkmark$&$\times$&$\checkmark$&$\checkmark$&$\checkmark$\\
		\bottomrule 
	\end{tabular}
	The bold indicates the best result of each metric.
\end{threeparttable}
\end{table}
\begin{figure}[!htb]
\centering\includegraphics[width=3.5in]{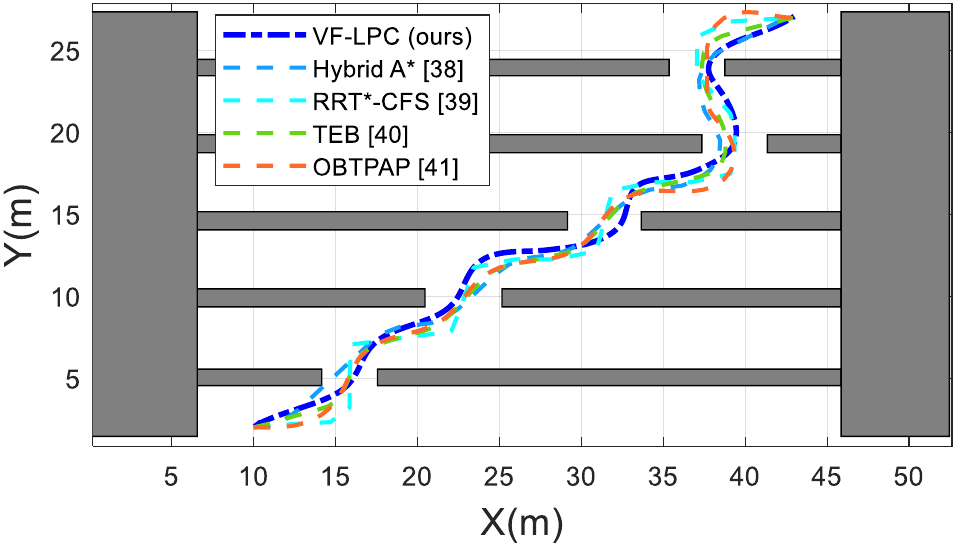}
\caption{The planned results using different comparison methods.}
\label{fig_ScenarioI}
\end{figure}
\subsection{Collision-Avoidance Simulation in High-Fidelity CarSim}
Tracking a desired path while avoiding static and moving obstacles is a fundamental task. To demonstrate the superiority of VF-LPC, we compare it with MPC-CBF\cite{zeng2021safety}, LMPCC\cite{brito2019model},  RHRL-KDP\cite{9756946}, and CFS\cite{liu2018convex} under different metrics. 
\subsubsection{Simulation settings}
All the comparison methods utilize the system dynamics model~\eqref{vehicle_dynamics} and solve the nonlinear optimization problem using the IPOPT solver\cite{wachter2006implementation} within the CasADi framework\cite{andersson2019casadi}, which describes the vehicle dynamics as follows:
\begin{equation}\label{vehicle_dynamics}\resizebox{0.85\hsize}{!}{$
	\left[ \begin{array}{l}
		\dot{X}\\
		\dot{Y}\\
		\dot{\psi}\\
		\dot{v}_x\\
		\dot{v}_y\\
		\dot{\omega}\\
	\end{array} \right] =\left[ \begin{array}{c}
		v_x\cos \psi -v_y\sin \psi\\
		v_x\sin \psi +v_y\cos \psi\\
		\omega\\
		v_y\omega +a_x\\
		2C_{af}( \frac{\delta_f}{m} -\frac{v_y+l_f\omega}{mv_x}) + 2C_{ar}\frac{l_r\omega -v_y}{mv_x}  -v_x\omega\\
		\frac{2}{I_z}\left[ l_fC_{af}( \delta _f-\frac{v_y+l_f\omega}{v_x}) -l_rC_{ar}\frac{l_r\omega -v_y}{v_x} \right]\\
	\end{array} \right], $}
\end{equation}
where $x=[X, Y, \psi, v_x, v_y, \omega]^\top\in \mathbb{R}^6$ is the state vector, $X, Y$ are the global horizontal and vertical coordinates of the vehicle, respectively, $\psi$ is the yaw angle, $v_x, v_y$ denote the longitudinal and lateral velocities, respectively, $\omega$ denotes the yaw rate, $l_f$ and $l_r$ are the distances from the center of gravity (CoG) to the front and rear wheels, respectively; $C_{af}, C_{ar}$ represent the cornering stiffnesses of the front and rear wheels, respectively; $I_z$ denotes the yaw moment of inertia; $m$ is the vehicle's mass. Their values can be found in TABLE~\ref{table:VehicleParameters}. In this model, the acceleration $a_x$ and the steering angle $\delta_{f}$ are two variables of the control vector $u$, i.e., $u=[a_x, \delta_{f}]^{\top}$. The reference state is ${x_r}=[X_r, Y_r, \psi_r, v_x^r, v_y^r, \omega_r]^{\top}$.
\begin{table}[!htb]
\centering
\setlength\tabcolsep{2.5pt}
\caption{Vehicle Dynamic Parameters}\label{table:VehicleParameters}
\begin{threeparttable}
	\begin{tabular}{ccccccccc}
		\toprule 
		$m$&$I_z$&$l_f$&$l_r$&$C_{af}$&$C_{ar}$\\ \midrule 
		$2257\rm{kg}$&\ $3524.9\rm{kg\cdot m^2}$&\ $1.33\rm{m}$&\ $1.81\rm{m}$&$66900\rm{N/rad}$&$62700\rm{N/rad}$\\
		\bottomrule 
	\end{tabular}
\end{threeparttable}
\end{table}

For VF-LPC, we employed the approach outlined in \cite{xiao2023ddk} to conduct system identification by collecting vehicle motion data from the CarSim solver. Finally, a linear time-invariant system model~\eqref{equ:latent_vehicle_dynamics} is generated for the VF-LPC algorithm, where $n_{\mathcal{K}}=10$ and $n_u=2$.  To construct an error model for facilitating subsequent algorithm design, we subtract the desired state from the current state, i.e, $\tilde{x}=x-x_r$ and the desired control from system control, i.e., $\tilde{u} = [a_x,\delta_{f}]^{\top}$. 
\subsubsection{Evaluation metrics}
The desired speed $v_d$ of all the methods is set to $25$ \rm{km/h}. To evaluate the VF-LPC approach, we compare it with other IMPC algorithms under several metrics, which are Aver. S.T. (average solution time of each time step), $J_{\rm{Lat}}$ (cost of the lateral error), $J_{\rm{Heading}}$ (cost of the heading error), and $J_{\rm{Con}}$ (control cost).
Specifically, they are defined by
$J_{\rm{Lat}}=\lVert (X-X_r)\sin\psi_r-(Y-Y_r)\cos\psi_r \rVert_{q_1}^2$,
$J_{\rm{Heading}}=\lVert \psi-\psi_r \rVert_{q_2}^2$,
and $J_{\rm{Con}}= \lVert\tilde{u}\rVert_{R}^2$, where $q_1,q_2>0$ and $R\in\mathbb{R}^{n_u\times n_u}$ is definite.
We define a weighted average cost to evaluate the overall performance, which is
\begin{equation*}\label{CostFunction}
\begin{aligned}
	J_{\rm{MC}} = \begin{array}{c}\frac{1}{M}\sum_{k=1}^M (J_{k,\rm{Lat}}+J_{k,\rm{Heading}}+J_{k,\rm{Con}}),\end{array}
\end{aligned}
\end{equation*}
where $M$ is the number of waypoints of the driving trajectory. The other quantitative metrics are route length $L_p$, completion time $t_{\mathrm{com}}$, and calculation time $t_{\rm{cal}}$.
\subsubsection{Analyses of the results}
The testing scenario requires the vehicle to track a reference path (black line in Fig.~\ref{fig_ScenarioII})  while avoiding collisions with static and moving obstacles.

\begin{figure}[!htb]
\centering\includegraphics[width=3.0in]{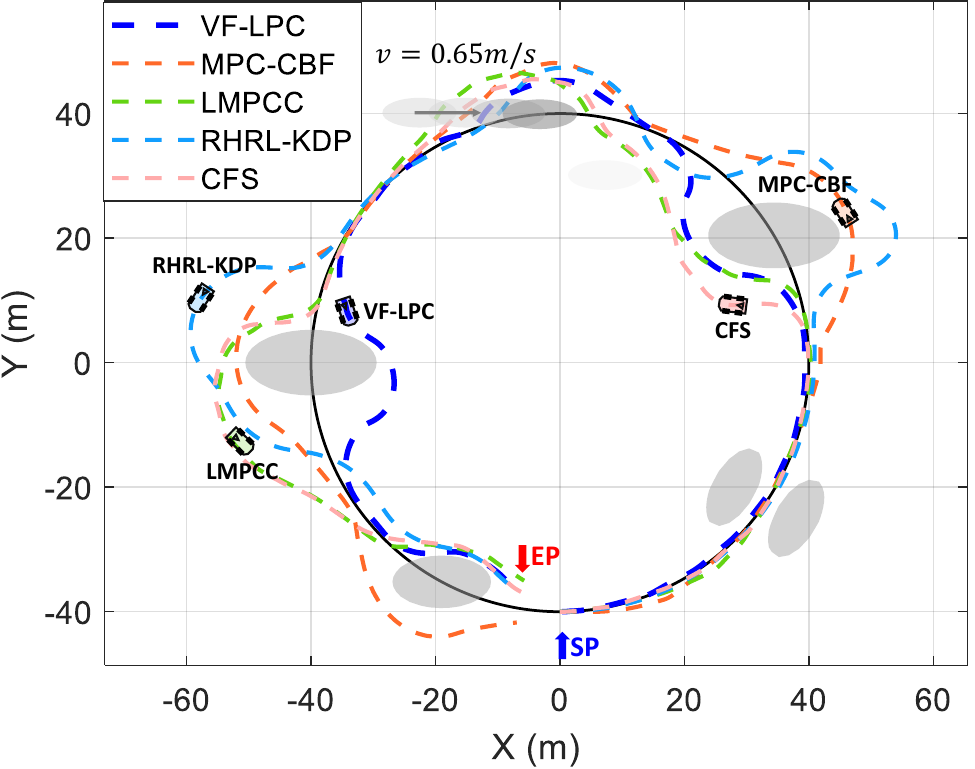}
\caption{Simulation results of avoiding static and moving obstacles for four IMPC methods. The black arrow denotes the moving direction of the dynamic obstacle, and the black thin line is the desired path. Labels `EP' and `SP' denote the endpoint and start point, respectively.}
\label{fig_ScenarioII}
\end{figure}

\begin{table}[!htb]
\centering
\setlength\tabcolsep{2.5pt}
\renewcommand\arraystretch{1.0}
\caption{Performance Evaluations of Integrated Motion Planning and Control Methods}\label{table:MetricsComparisonII}
\begin{threeparttable}\scriptsize
	\begin{tabular}{ccccccccc}
		\toprule 
		{ Metrics}&{VF-LPC}&{MPC-CBF\cite{zeng2021safety}}&{LMPCC\cite{brito2019model}}&RHRL-KDP\cite{9756946}&CFS\cite{liu2018convex}\\ \midrule 
		$J_{\rm{MC}}$&$\mathbf{61.50}$&$76.76$&$79.09$&$155.98$&$105.03$\\
		$L_p$ \rm{(m)}&$\mathbf{252.54}$\,\,\,\, &$287.45$\,\,\,&$266.76$\,\,\,&$307.29$&$273.71$\\
		$t_\mathrm{com}$ \rm{(s)}&$\mathbf{38.40}$&$41.13$&$38.88$&\,\,\,$44.72$&\,\,\,$39.03$\\
		$t_{\mathrm{cal}}$ \rm{(s)}&$\mathbf{0.006}$&\,\,\,$0.15$&\,\,\,$0.15$&\,\,\,\,\,\,\,$0.07$&\,\,\,\,\,\,$0.12$\\
		\bottomrule 
	\end{tabular}
	\footnotesize
	The bold indicates the best result of each metric.
\end{threeparttable}
\end{table}
As seen in TABLE~\ref{table:MetricsComparisonII}, VF-LPC has the lowest motion control cost $J_{\rm{MC}}$. The computational cost of VF-LPC is lower than that of MPC-CBF, LMPCC, RHRL-KDP, and CFS, and it generates the shortest trajectory and has the least completion time. As VF-LPC employs a linear time-invariant Koopman-based vehicle dynamic model to optimize the nonlinear optimal motion planning problem based on the scheme of Eb-RHRL, the computational time is the least. However, the MPC-based methods and RHRL-KDP require online solving of nonlinear optimization problems, resulting in a computational burden.

\subsection{Model Uncertainties Learning}
As mentioned in previous sections, the offline trained deep Koopman model probably does not perfectly reflect the exact system dynamics when deployed online due to external uncertainties. As a result, this could cause performance degradation both in planning and control and even safety issues. In this section, we will demonstrate the sparse GP-based learning model uncertainties for the deep Koopman model. 

The uncertainty in the dynamics of a system negatively affects the planning and control performance of robots. To eliminate the influence, a quantified sparse GP \cite{10271561} is utilized to compensate for the difference between the offline-trained deep Koopman model and the exact vehicle model. The simulated dynamical model~\eqref{vehicle_dynamics}, reveals that the vehicle states $v_x, v_y, \omega$ and $\rho_e(x_k)$ are primarily affected by dynamic parameters. Therefore, we further derive
$B_d=[\mathbf{0}_3\ I_{3}\ \mathbf{0}_{3\times n_{\rho_e}}]^\top$,
\text{and}\ $g(x,u)=g(v_x,v_y,\omega,\rho_e^{\top},a_x, \delta_{f}):\mathbb{R}^{n_{\rho_e}+5}\rightarrow\mathbb{R}^{3}$.

The algorithm was simulated in a racing track road and the desired speed $v_d$ is set to be $6\ \rm{m/s}$. Note that we collect motion data of the real vehicle (right half of Fig.~\ref{fig_DRHACL_VehicleControl}) to train for an offline nominal deep Koopman model with $n_{\rho_e}=6$ and $x=[X, Y, \psi, v_x, v_y, \omega]^\top\in \mathbb{R}^6$. The exact vehicle parameters are: $m=1257 \rm{kg}$, $I_z=1524.9 \rm{kg}\cdot\rm{m}^2$, $C_{af}=8790 \rm{N/rad}$, and $C_{ar}=30400\rm{N/rad}$. The iid process noise $w_k\in\mathbb{R}^6$ with zero mean and variance $\sigma=0.002$ is added to the vehicle dynamics. Specifically, $g(x,u)$ maps $[x,u]^\top$ to $[v_x,v_y,\omega]^\top$. 

\begin{figure}[!htp]
\centering\includegraphics[width=3.5in]{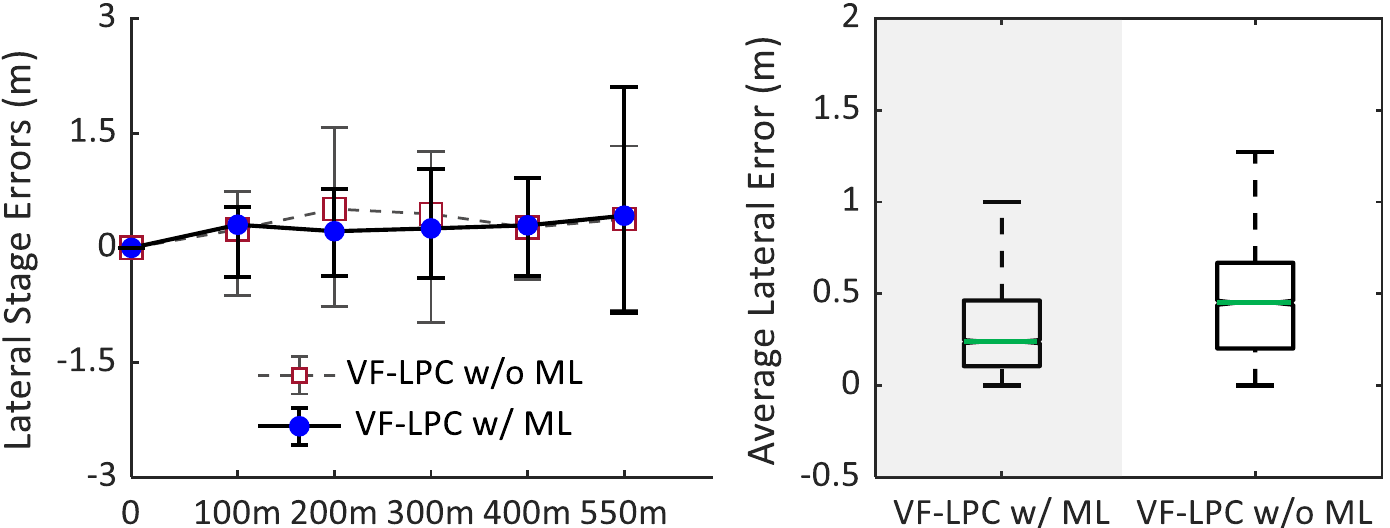}
\caption{Tracking error comparison between VF-LPC (w/o ML) and VF-LPC (w/ ML). }
\label{fig_tracking_results_w_wo_GP}
\end{figure}

To evaluate the effectiveness of our algorithm on learning model uncertainties, {VF-LPC} with and without model learning, abbreviated as {VF-LPC} (w/ ML) and {VF-LPC} (w/o ML), are separately tested on the racing road. At the initial stage, i.e., the first $100\rm{m}$, we update the dictionary online, and it is then sparsified with Approximate Linear Dependence (ALD) similarly to\cite{10271561}. With the sparsified dictionary, we train it to update the Jacobian matrices~\eqref{JocabianMatrices} and compensate for~\eqref{equ:latent_vehicle_dynamics} during the remaining miles.

From the simulation results in Fig.~\ref{fig_tracking_results_w_wo_GP}, notable improvement can be observed in the lateral stage error of VF-LPC (w/ ML) compared to VF-LPC (w/o ML). Ultimately, the average lateral error throughout the entire testing process for VF-LPC (w/ ML) is lower than that of VF-LPC (w/o ML). These results validate the capability of VF-LPC to process model uncertainty and enhance control performance.

\subsection{Validation on Unmanned Aerial Vehicles}
In this subsection, we demonstrate the effectiveness of VF-LPC on motion planning tasks of a different class of mobile robots, i.e., quadrotor UAVs, which have intricate dynamics. We use a widely-used model \cite{xu2022dynamic} to simulate the dynamics. 

Firstly, we conduct data-driven modeling of the quadrotor UAV using deep Koopman operators\cite{xiao2023ddk}, followed by employing the VF-LPC to generate the robot's maneuvers in a 3D environment, as illustrated in Fig.~\ref{fig_uav_sim_results}. The state variables of the training data consist of the XYZ coordinates and velocities ($v_x, v_y, v_z$) of the quadrotor UAV, with control variables of acceleration in the XYZ directions. A total of $148$ data, each comprising approximately $4500$ samples, were collected randomly in the state and control space for training and model validation purposes. The $6$-dimensional state vector $[X,Y,Z,v_x,v_y,v_z]^\top$ was processed through an encode layer, resulting in a final system model of $18$ state variables and $3$ control variables (i.e., $n_{\mathcal{K}}=18, n_u=3$). Then the obtained model is incorporated into VF-LPC, which focuses on planning 3-axis accelerations for attitude control. Not only the VF-LPC approach can avoid multiple static obstacles. Under suddenly appearing obstacles (see Fig.~\ref{fig_uav_sim_results}), the VF-LPC approach is also capable of handling them, and the average speed of the vehicle reaches $1 \rm{m/s}$.

The results show that the VF-LPC approach can achieve near-optimal motion planning for robots with offline-trained data-driven dynamics.
\begin{figure}[!htp]
\centering\includegraphics[width=3.5in]{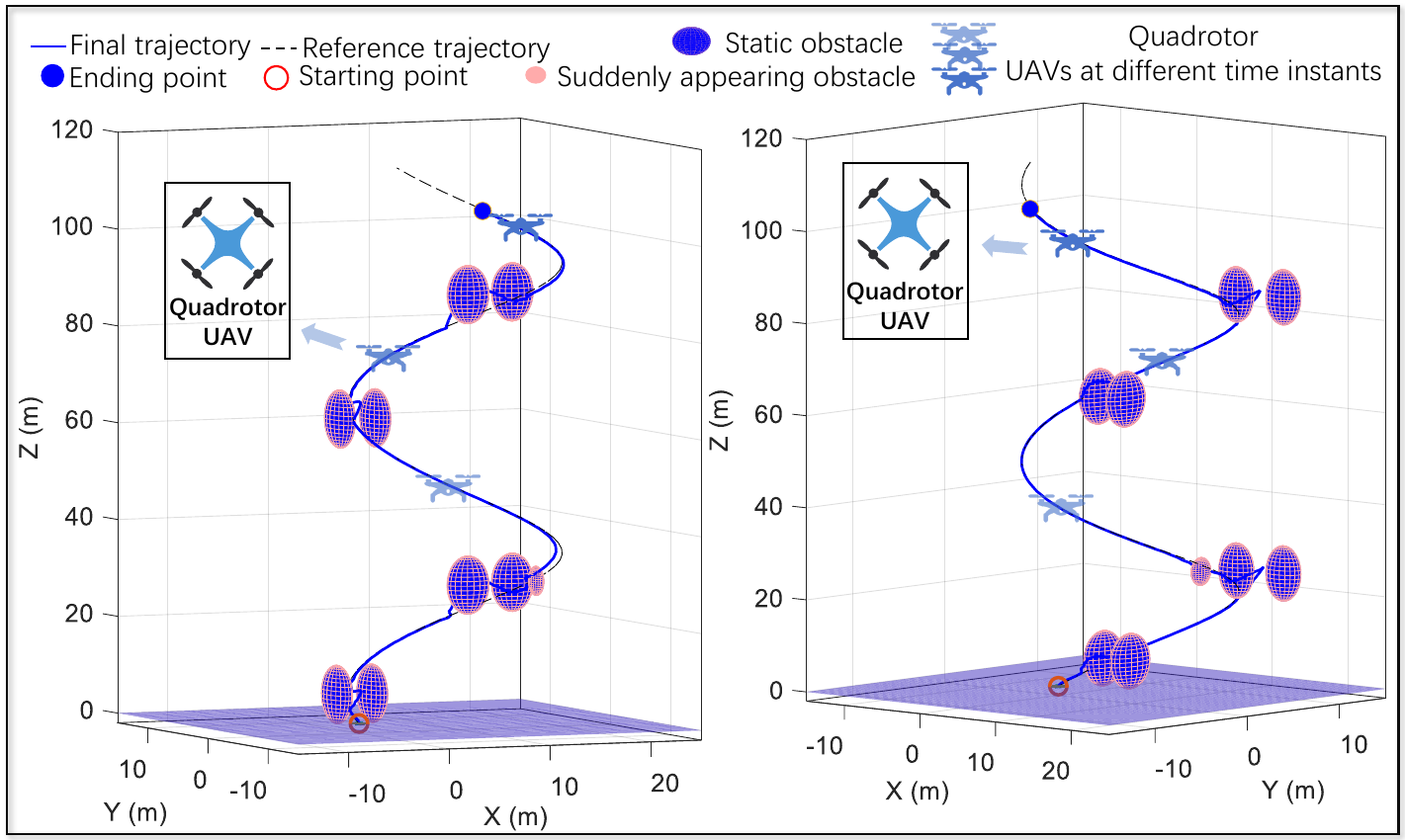}
\caption{Validating test on UAVs: Avoiding multiple static and suddenly appearing obstacles when tracking a desired trajectory. }
\label{fig_uav_sim_results}
\end{figure}
\subsection{Real-World Experiments}
To further validate the effectiveness of VF-LPC, real-world vehicular experiments were conducted on the Hongqi E-HS3 platform, which is shown in module B-(1) of Fig.~\ref{fig_DRHACL_VehicleControl}.

At each time instance within the predictive horizon, the desired trajectory and obstacles are initially transformed into the vehicle body's local coordinate system. Then, we employ polynomial curves to fit the desired path points, obtaining the desired path $\mathcal{P}$ with~\eqref{guiding_vector_dynamic_constraint}. Subsequently, the safe trajectory generated by the kinodynamic composite vector field is transformed back into the global coordinate system.

\subsubsection{Multiple Static Obstacles Avoidance}
In this scenario, we evaluate the obstacle avoidance capability of the VF-LPC approach. As illustrated in Fig.~\ref{fig_exp1}, the black dash line denotes the desired path. Due to the vehicle's long wheelbase of $4.9$ meters, the turning radius is large. However, the desired path is constrained and small in size, posing a significant challenge for the algorithm in terms of safe obstacle avoidance and tracking. We set the speed to $1.5 \rm{m/s}$, and it can be observed that the vehicle successfully reaches the destination while avoiding multiple obstacles. VF-LPC can plan a smooth trajectory for guiding finite-horizon actor-critic learning processes. This also reflects the effectiveness and advantages of the kinodynamic guiding vector field which satisfies the kinodynamic constraint.
\begin{figure}[!htp]
\centering\includegraphics[width=3.5in]{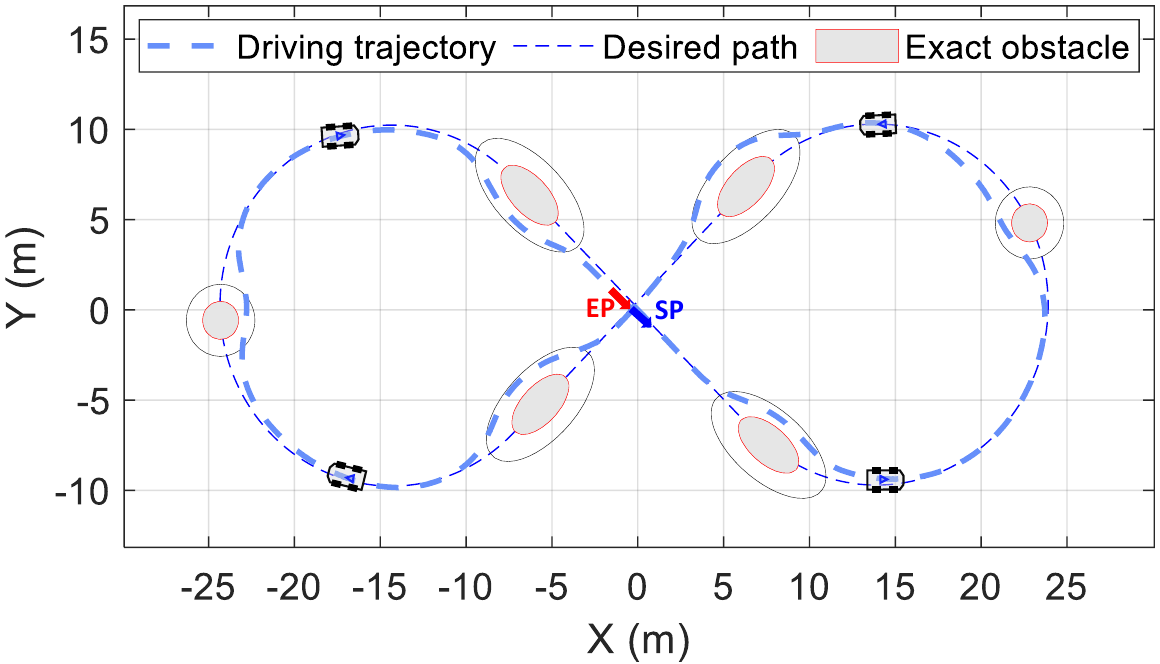}
\caption{Avoiding multiple static obstacles when tracking a size-constrained desired path. The black vehicles represent the intelligent vehicle at different time instants. Labels `EP' and `SP' denote the endpoint and starting point.}
\label{fig_exp1}
\end{figure}

\subsubsection{Moving Obstacles Avoidance}
As shown in Fig.~\ref{fig_exp2}, a human driver first drove the intelligent vehicle to generate the desired path $\mathcal{P}$ in this scenario. We have the following settings for testing our algorithm: The moving obstacles start to move at different preset velocities when the distance between them and the intelligent vehicle is less than $25 \rm{m}$, thereby validating its emergency collision avoidance capability. Using a gradient of gray, we label the obstacles' positions at different moments during their motion processes, where the darkest color represents the initial moment.

\begin{figure}[!htp]
\centering\includegraphics[width=3.5in]{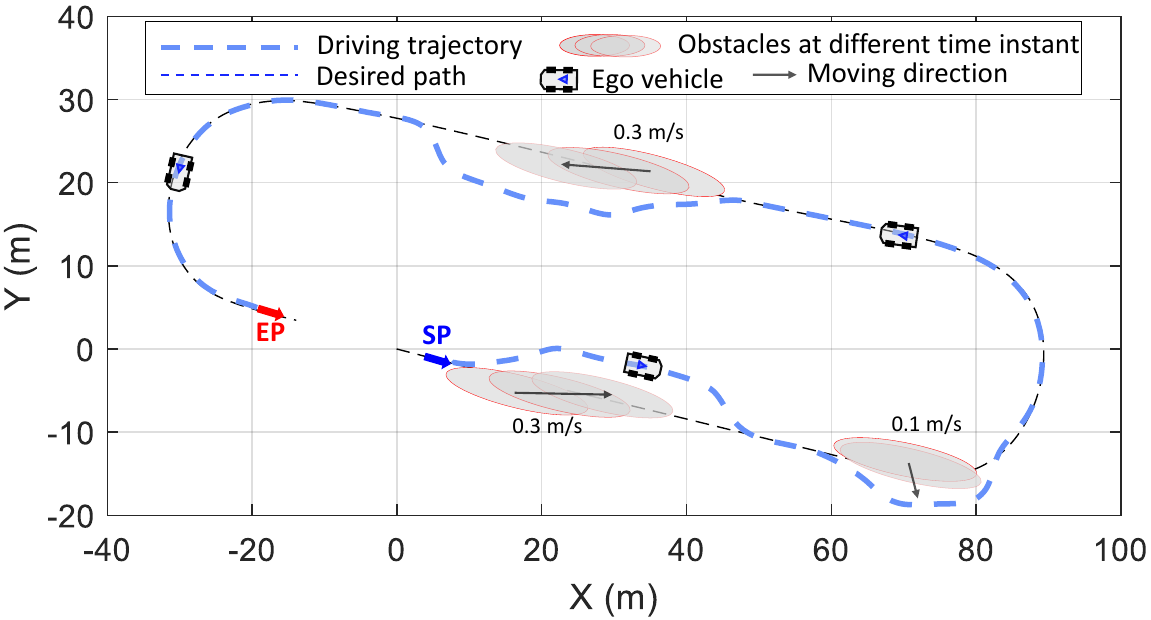}
\caption{Avoiding multiple moving obstacles when tracking a desired path. Labels `EP' and `SP' denote the endpoint and start point, respectively.}
\label{fig_exp2}
\end{figure}

From the overall tracking results in Fig.~\ref{fig_exp2}, the intelligent vehicle keeps safe distances from obstacles all the time and returns to the desired path smoothly. In addition, our method exhibits small lateral tracking errors and longitudinal velocity deviation Moreover, the maximum vehicle's speed reaches $2.4\ \rm{m/s}$. Finally, it arrives at the ending point successfully and completes the motion planning and control task.

\section{Conclusion and Future Work}\label{Conclusion}
This paper presents the Vector Field-guided Learning Predictive Control (VF-LPC) approach for mobile robots with safety guarantees, which offers a framework for the Integrated Motion Planning and Control (IMPC) of robots. VF-LPC designs kinodynamic guiding vector field for safe robot maneuvering. This is a notable improvement over the existing composite vector fields. Also, the learned deep Koopman model is updated online by sparse GP to improve safety and control performance. It is then incorporated into LPC to solve nonlinear IMPC problems.  {Rigorous theoretical analysis is provided to witness the online learning convergence.} 

VF-LPC is evaluated against motion planning methods that employ MPC and RL in high-fidelity CarSim software. The results show that VF-LPC outperforms them under metrics of completion time, route length, and average solution time. To further show the effectiveness and generalization of our proposed approach, we carried out path-tracking control tests of a mobile vehicle on a racing road to validate the model uncertainties learning capability, and we also successfully implemented the approach on quadrotor UAVs. Finally, we conducted real-world experiments on a Hongqi E-HS3 vehicle.

{Our work has several possible future directions. 1)  With an additional prediction module, VF-LPC could be promising even in higher-speed tasks. 2) In unknown environments, one can construct safer barrier functions in real-time using the information perceived by onboard sensors.}

\bibliographystyle{IEEEtran}
\bibliography{VF_LPC.bib}
\end{document}